\newcommand{\indep}{\perp \!\!\! \perp}
\newcommand{\dep}{\not\!\perp \!\!\! \perp}
\let\oldReturn\Return
\renewcommand{\Return}{\State\oldReturn}
\algnewcommand{\LineComment}[1]{\Statex \hskip\ALG@tlm #1}
\def\figcaption{%
     \refstepcounter{figure}%
     \@dblarg{\@caption{figure}}}
\newcolumntype{M}[1]{>{\centering\arraybackslash}m{#1}}
\DeclareMathOperator*{\argmax}{arg\,max}
\newcommand{\fillcol}{red!5}
\newcommand{\bordercol}{red}
\tikzset{%
     remember picture with id/.style={%
       remember picture,
       overlay,
       save picture id=#1,
     },
     save picture id/.code={%
       \edef\pgf@temp{#1}%
       \immediate\write\pgfutil@auxout{%
         \noexpand\savepointas{\pgf@temp}{\pgfpictureid}}%
     },
     if picture id/.code args={#1#2#3}{%
       \@ifundefined{save@pt@#1}{%
         \pgfkeysalso{#3}%
       }{
         \pgfkeysalso{#2}%
       }
     }
   }
   \def\savepointas#1#2{%
  \expandafter\gdef\csname save@pt@#1\endcsname{#2}%
}
\def\tmk@labeldef#1,#2\@nil{%
  \def\tmk@label{#1}%
  \def\tmk@def{#2}%
}
\NewDocumentCommand{\tikzmarkin}{m D(){0.825,-0.10} D(){-0.175,0.27}}{%
      \tikz[remember picture,overlay]
      \draw[line width=1pt,rectangle,fill=\fillcol,draw=\bordercol]
      (pic cs:#1) ++(#2) rectangle (#3)
      ;}
\newcommand\tikzmarkend[2][]{%
\tikz[remember picture with id=#2] #1;}
\newcommand*{\rom}[1]{\expandafter\@slowromancap\romannumeral #1@}
\title{Improving Bayesian Network Structure Learning in the Presence of Measurement Error}
\author{\name Yang Liu\textsuperscript{1}\email yangliu@qmul.ac.uk
\AND
\name Anthony C. Constantinou\textsuperscript{1, 2}\email a.constantinou@qmul.ac.uk
\AND
\name ZhiGao Guo\textsuperscript{1}\email zhigao.guo@qmul.ac.uk\\
\addr\textsuperscript{1}School of Electronic Engineering and Computer Science\\
Queen Mary University of London\\
London, E1 4NS, UK\\
\textsuperscript{2}The Alan Turing Institute\\
London, NW1 2DB, UK}
\begin{document}

\maketitle

\begin{abstract}
Structure learning algorithms that learn the graph of a Bayesian network from observational data often do so by assuming the data correctly reflect the true distribution of the variables. However, this assumption does not hold in the presence of measurement error, which can lead to spurious edges. This is one of the reasons why the synthetic performance of these algorithms often overestimates real-world performance. This paper describes an algorithm that can be added as an additional learning phase at the end of any structure learning algorithm, and serves as a correction learning phase that removes potential false positive edges. The results show that the proposed correction algorithm successfully improves the graphical score of four well-established structure learning algorithms spanning different classes of learning in the presence of measurement error.
\end{abstract}

\begin{keywords}
  data noise, directed acyclic graph, measurement error, probabilistic graphical models
\end{keywords}

\section{Introduction}
Bayesian network (BN) is a probabilistic graphic model that captures causal or conditional relationships between variables via a directed acyclic graph (DAG). Learning BNs from observational data is recognised as a challenging problem that has received increasing attention during the past few decades. Various algorithms have been proposed to tackle this problem and are categorised into constraint-based, score-based and hybrid learning algorithms.

The PC algorithm~\citep{spirtes2000causation} is one of the earliest proposed constraint-based algorithms which attempts to recover the Complete Partial Directed Acyclic Graph (CPDAG) of the underlying true causal graph by performing conditional independence tests between variables. Many other algorithms are derived from PC, including MMPC~\citep{tsamardinos2003time} which can handle thousands of variables via sequentially choosing the variable with the maximum association with the target variable into its parents and children set, PC-fdr~\citep{li2009controlling} which controls the false discovery rate of the skeleton of the learned graph under a user-specified level at the limit of large sample sizes and PC-stable~\citep{colombo2014order} which resolves the issue of PC’s output being dependent on the order of variables as they appear in the data. The GES algorithm~\citep{chickering2002optimal}, on the other hand, is a well-established score-based algorithm that searches the optimal CPDAG over two phases. In phase \rom{1}, GES greedily adds edges that maximise the Bayesian score, whereas in phase \rom{2}, it greedily removes edges that maximise the Bayesian score. The ILP algorithm~\citep{cussens2011bayesian} is another well-established algorithm that tackles the structure learning problem with an integer linear programming approach. Lastly, hybrid learning algorithms combine both classes of learning, constraint-based and score-based, and include the MMHC algorithm~\citep{tsamardinos2006max} that combines MMPC with hill-climbing search, and the H2PC algorithm~\citep{gasse2014hybrid} that combines HPC~\citep{gasse2014hybrid} with hill-climbing search.

Most of these algorithms assume that their input data are accurately sampled from the true distributions. However, this assumption is often not true when working with real-world data. The assumption of an underlying measurement error has only recently attracted attention in terms of its effect on BN structure learning. Scheines et al~\citep{scheines2016measurement} studied the effect of Gaussian measurement error on score-based FGES~\citep{ramsey2017million} and showed that even minor levels of measurement error can considerably deteriorate its accuracy. Zhang et al~\citep{zhang2018causal} investigated the linear non-Gaussian models in the presence of measurement error and presented four conditions that make the underlying structure identifiable from the observed variables that incorporate measurement error. Lastly, Blom et al~\citep{blomaupper} proposed a method to estimate the upper bound of the variance of measurement error in linear Gaussian models, and used this bound as a correction of conditional independence tests during constraint-based learning.

Traditionally, measurement error is generated and modelled under the assumption of Normally distributed and continuous data~\citep{bollinger2017bayesian}, although various other types of synthetic noise have recently been investigated with discrete variables~\citep{constantinou2020large}. In this paper, we assume the data are discrete, and that variables with measurement error are children of their underlying error-free version, and not the actual parents of other variables, essentially making them independent of other variables in the graph given their unobserved error-free version. We propose a score-based correction method called Spurious Edge Detection (SED) algorithm which aims to identify and remove potential false positive (FP) edges learned by other structure learning algorithms, often in the presence of measurement error. The remainder of the paper is organised as follows: the terminology and underlying assumptions are described in Section \ref{sec: preliminaries}, Section \ref{sec: impact of measurement error on structure learning} illustrates the impact of measurement error on structure learning, Section \ref{sec: EM-based correction algorithm} describes the correction algorithm, Section \ref{sec: empirical evaluation} presents the results , and we provide our conclusions along with future research directions in Section \ref{sec: conclusion and discussions}.
\section{Preliminaries}
\label{sec: preliminaries}
This section presents the preliminaries and the necessary terminology and assumptions. We assume that each variable present in the data may be subject to measurement error. We refer to variables with measurement error as \textit{noisy} variables and to variables without measurement error as \textit{error-free} variables. We also assume that each potentially noisy variable present in the data is the child of its error-free unobserved version not present in the data. We denote the unobserved error-free variables as $V_i$ where $i$ represents the index of the error-free variable, and its corresponding observed noisy variable as $V_i^o$ where superscript $o$ indicates the observed version of variable $V_i$. We use lowercase letters to represent the assignment of states where $v_i^l$ denotes the $l$th state of variable $V_i$ or corresponding $V_i^o$. We define the error-free graph as $G\left(\bm V, \bm E\right)$ composed of the error-free variable set $\bm V = \left(V_1, \ldots, V_n\right)$ and edge set $\bm E$ between variables $\bm V$. When a variable in $\bm V$ does not incorporate measurement error, we assume that its corresponding error-free $V_i$ and observed $V_i^o$ will share identical distributions; whereas the distributions will differ in the presence of measurement error where the discrepancy between distributions increases with the measurement error.

The assumption that a potentially noisy observed variable $V_i^o$ has only one parent, where this parent represents its unobserved error-free version $V_i$, produces the following \textit{Independence rule}:

\vspace{4mm} 
\textbf{Independence rule}: In the presence of measurement error, an observed variable $V_i^o$ is independent of other variables conditional on its unobserved error-free version $V_i$.
\vspace{4mm} 

Figure \ref{fig: an example of error-free model and observed model} presents a simple example that illustrates the relationship between error-free and observed variables given the Independence rule, where each $V_i^o$ becomes independent of the remaining nodes given its corresponding error-free parent $V_i$. Moreover, if the error-free variable $V_i$ has value $v_i^l$, its corresponding noisy version will be subject to an error rate $\alpha_i^j$ where
\begin{equation}
\label{equ: error function}
\alpha_i^l = 1 - P\left(V_i^o = v_i^l\mid V_i = v_i^l\right)
\end{equation}
In other words, $\alpha_i^l$ represents the rate of observing a value for $V_i^o$ that is not equal to the true value $v_i^l$ of $V_i$. Note that it is possible for different states of $V_i$ to be subject to varying error rates $\alpha_i^l$. We denote the error rate $\alpha_i$ of variable $V_i^o$ in terms of its maximum error rate amongst all states in $V_i$, i.e., $\alpha_i = \max\limits_l\alpha_i^l$.
\begin{figure}[H]
\centering
\captionsetup{format=hang}
  \begin{tikzpicture}[scale = 1]
    \node[latent, dashed] (V^*_1) at (-1.5, 0) {$V_1$};
    \node[latent] (V_1) at (-1.5, -1.5) {$V^o_1$};
    \node[latent, dashed] (V^*_2) at (0, 0) {$V_2$};
    \node[latent] (V_2) at (0, -1.5) {$V^o_2$};
    \node[latent, dashed] (V^*_3) at (1.5, 0) {$V_3$};
    \node[latent] (V_3) at (1.5, -1.5) {$V^o_3$};
    \node[] at (3.3, 0) {$G\left(\bm V, \bm E\right)$};

    \edge {V^*_1} {V^*_2};
    \edge {V^*_2} {V^*_3};
    \edge {V^*_1} {V_1};
    \edge {V^*_2} {V_2};
    \edge {V^*_3} {V_3};
    \draw[thick, dashed] (-2, -0.5) rectangle (2, 0.5);
  \end{tikzpicture}
  \caption{A hypothetical graph that illustrates the relationship between the error-free variables $\bm V$ and the corresponding observed variables $\bm V^o$ given the Independence rule, where a noisy variable $V_i^o$ becomes independent of other variables in $\bm G$ given $V_i$}
  \label{fig: an example of error-free model and observed model}
\end{figure}
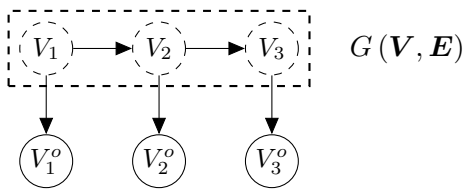
This paper also adopts the following widely used assumptions~\citep{spirtes2000causation}:
\begin{enumerate}[label = (\roman*)]
    \item\textbf{Markov assumption}: Given a directed acyclic graph $G$ over a variable set $\bm V$, every variable in $\bm V$ is independent of its non-descendants conditional on its parents.
    \item\textbf{Causal Faithfulness assumption}: Given a directed acyclic graph $G$ over a variable set $\bm V$, a probability distribution $P\left(\bm V\right)$ is faithful to $G$ if and only if the conditional independence relationships in $P\left(\bm V\right)$ are exactly the same as the independence relationships inferred by \textit{d-separation criterion}~\citep{spirtes2000causation} from $G$.
    \item \textbf{Causal Sufficiency assumption}: There are no unmeasured variables acting as a common cause of any two or more observed variables.
\end{enumerate}
\section{The impact of measurement error on structure learning}
\label{sec: impact of measurement error on structure learning}
This section illustrates that measurement error generally causes the structure learning algorithms to produce a higher number of spurious edges that tend to lead to a greater number of 3-vertex cliques, compared to the true number of such cliques in the ground truth graph. A clique is a set of nodes where each pair of nodes in the clique is adjacent. We first explain why this phenomenon occurs in theory, from the perspective of constraint-based learning, and then present the effect in practise by illustrating the empirical effect of measurement error on algorithms spanning all three classes of learning. Because constraint-based learning relies on statistical tests, we discuss the effect of measurement error in terms of both marginal and conditional dependencies between variables. Given the Causal Faithfulness assumption, the dependencies between variables are consistent with those entailed by applying d-separation rules on the BN. Therefore, we restrict the description about the effect of measurement error on d-connections and d-separations. For the unconditional (i.e., marginal dependence) case, we derive the Theorem{~\ref{the: unconditional}}.
\begin{theorem}
\label{the: unconditional}
The d-connection and d-separation relationships between two error-free variables $V_1$ and $V_2$ in an error-free graph $G$ are consistent with the d-connection and d-separation relationships of their corresponding observed versions $V_1^o$ and $V_2^o$ affected by measurement error, given the Independence rule.
\end{theorem}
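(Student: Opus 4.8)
The plan is to phrase everything in terms of the augmented DAG $G'$ obtained from $G$ by adjoining, for every index $i$, the observed node $V_i^o$ together with the single edge $V_i \to V_i^o$; by the Independence rule this is exactly the graph that encodes the joint behaviour of the error-free and observed variables. Since the statement concerns the marginal (unconditional) case, I would use the standard fact that two nodes are marginally d-connected precisely when some path between them contains no collider: with an empty conditioning set a collider can never be opened, and a non-collider can never be blocked. It therefore suffices to establish a collider-preserving bijection between the $V_1$--$V_2$ paths in $G$ and the $V_1^o$--$V_2^o$ paths in $G'$.

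First I would pin down the shape of every path between $V_1^o$ and $V_2^o$ in $G'$. For each $k$ the node $V_k^o$ is a leaf whose unique incident edge is $V_k \to V_k^o$, so it has degree one and can never serve as an interior vertex of any path. Consequently every $V_1^o$--$V_2^o$ path must leave $V_1^o$ through $V_1^o \leftarrow V_1$, enter $V_2^o$ through $V_2 \to V_2^o$, and have all of its interior vertices drawn from $\bm V$. Stripping the two terminal edges thus yields a path $\pi$ between $V_1$ and $V_2$ lying entirely inside $G$, and conversely prepending $V_1^o \leftarrow$ and appending $\to V_2^o$ to any $V_1$--$V_2$ path $\pi$ in $G$ produces a valid $V_1^o$--$V_2^o$ path $\pi'$ in $G'$; these two operations are mutually inverse.

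Next I would check that this bijection preserves the presence of colliders. The only vertices whose status can change are the two former endpoints $V_1$ and $V_2$, which become interior in $\pi'$. At $V_i$ the adjoined edge $V_i \to V_i^o$ points away from $V_i$, so $V_i$ carries an outgoing edge on $\pi'$ and hence cannot be a collider there---regardless of whether its edge along the middle of the path enters or leaves it, $V_i$ is a fork or a chain. Every other interior vertex of $\pi'$ retains exactly the two incident edges it had on $\pi$, so its collider status is unchanged. Therefore $\pi'$ contains a collider if and only if $\pi$ does, and in particular $\pi'$ is collider-free exactly when $\pi$ is.

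Combining the two steps gives the result: a collider-free $V_1$--$V_2$ path exists in $G$ if and only if a collider-free $V_1^o$--$V_2^o$ path exists in $G'$, which is to say $V_1$ and $V_2$ are marginally d-connected in $G$ if and only if $V_1^o$ and $V_2^o$ are marginally d-connected in $G'$; since d-separation is the negation of d-connection, the d-separation statement follows at once. I expect the main obstacle to be the bookkeeping in the collider-preservation step---making airtight the argument that adjoining the sink edge $V_i \to V_i^o$ can only create non-colliders at the endpoints, and confirming that the leaf property of the observed nodes genuinely forbids any path from routing through a third observed variable $V_k^o$.
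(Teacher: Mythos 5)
Your proof is correct and takes essentially the same approach as the paper's: both exploit the fact that, under the Independence rule, $V_1^o$ and $V_2^o$ are pendant leaves attached to $V_1$ and $V_2$, so every $V_1^o$--$V_2^o$ path is a $V_1$--$V_2$ path in $G$ with the two terminal edges $V_1^o \leftarrow V_1$ and $V_2 \rightarrow V_2^o$ adjoined, and adjoining these edges cannot create colliders. Your version merely makes rigorous what the paper leaves implicit, namely the leaf-degree argument forbidding paths through other observed nodes and the explicit check that the former endpoints become non-colliders.
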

\begin{proof}
    \begin{enumerate}
        \item When $V_1$ and $V_2$ are d-separated, this implies that there is either no direct path or no indirect unblocked path between $V_1$ and $V_2$ in $G$. Given Independence rule, the only neighbours of $V_1^o$ and $V_2^o$ are $V_1$ and $V_2$ who serve as their respective error-free parents. Thus, there is also either no direct path and no indirect unblocked path between $V_1^o$ and $V_2^o$ which means $V_1^o$ and $V_2^o$ are also d-separated.
        \item When $V_1$ and $V_2$ are d-connected, there must be at least one unblocked path $p$ from $V_1$ to $V_2$. Given Independence rule, $V_1$ and $V_2$ are the respective parents of $V_1^o$ and $V_2^o$. Thus, by combining $V_1^o\leftarrow V_1, p\textrm{ and }V_2\rightarrow V_2^o$, we can find an unblocked path from $V_1^o$ to $V_2^o$ that makes $V_1^o$ and $V_2^o$ d-connected.
    \end{enumerate}
\end{proof}
According to Theorem~\ref{the: unconditional}, the unconditional relationship between error-free variables should be consistent with the unconditional relationship of their corresponding noisy observed variables given the Causal Faithfulness assumption. However, the conditional independence between error-free variables may not always hold for their corresponding noisy observed versions. Figure~\ref{fig: the effect of measurement error on two different kinds of connection} illustrates two different causal classes with measurement error on the node $S^o$. Specifically, Figure~\ref{fig: the effect of measurement error on two different kinds of connection (a)} represents the causal class of common-effect where $V_1$ and $V_2$ become d-connected conditional on either $S$ or its noisy version $S^o$, whereas Figure~\ref{fig: the effect of measurement error on two different kinds of connection (b)} represents the causal class of causal-chain where $V_1$ and $V_2$ become d-separated conditional on $S$, yet they remain d-connected conditional on $S^o$ (this observation also holds for the causal class of common-cause).
\begin{figure}[H]
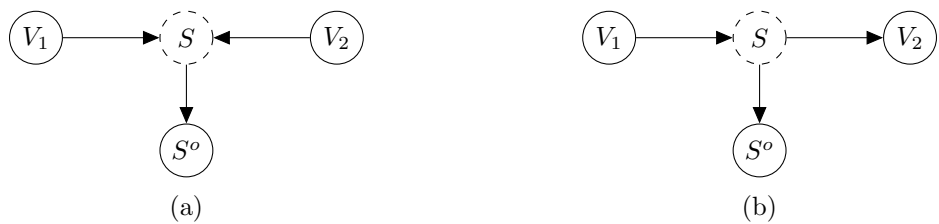
 
\centering
\captionsetup{format=hang}
\begin{adjustbox}{minipage = \linewidth, scale = 1}
\begin{subfigure}{.5\textwidth}
  \centering
  \tikz{
    \node[latent, dashed] (S_i^*) at (0, 0) {$S$};
    \node[latent] (V_2^*) at (2, 0) {$V_2$};
    \node[latent] (S_i) at (0, -1.5) {$S^o$};
    \node[latent] (V_1^*) at (-2, 0) {$V_1$};

    \edge {S_i^*} {S_i};
    \edge {V_1^*} {S_i^*};
    \edge {V_2^*} {S_i^*};
  }
  \caption{}
  \label{fig: the effect of measurement error on two different kinds of connection (a)}
\end{subfigure}%
\begin{subfigure}{.5\textwidth}
  \centering
  \tikz{
    \node[latent, dashed] (S_i^*) at (0, 0) {$S$};
    \node[latent] (V_2^*) at (2, 0) {$V_2$};
    \node[latent] (S_i) at (0, -1.5) {$S^o$};
    \node[latent] (V_1^*) at (-2, 0) {$V_1$};

    \edge {S_i^*} {S_i};
    \edge {V_1^*} {S_i^*};
    \edge {S_i^*} {V_2^*};
  }
  \caption{}
  \label{fig: the effect of measurement error on two different kinds of connection (b)}
\end{subfigure}
\end{adjustbox}
\caption{Modelling the presence of measurement error on the two different causal equivalence classes where case (a) represents the common-effect class, where $V_1$ and $V_2$ become d-connected conditional on either $S$ or $S^o$, and (b) represents the causal-chain class where $V_1$ and $V_2$ become d-separated conditional on $S$, although they remain d-connected conditional on noisy $S^o$ (this also holds for the causal class of common-cause).}
\label{fig: the effect of measurement error on two different kinds of connection}
\end{figure}
These lead to Theorem{~\ref{the: conditional dependent}} and{~\ref{the: conditional independent}} which state that although the conditional d-connection relation is consistent between error-free variables and observed noisy variables, it is likely that some conditional d-separations will not hold when the observed variables incorporate measurement error.
\begin{theorem}
\label{the: conditional dependent}
If two error-free variables $V_1$ and $V_2$ are d-connected given a variable set $\bm S$, this d-connection will also hold for their observed noisy variables $V_1^o$ and $V_2^o$ conditional on noisy variable set $\bm S^o$.
\end{theorem}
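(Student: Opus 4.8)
The plan is to reuse the path-construction idea from the proof of Theorem~\ref{the: unconditional}, but now to carefully track how the two conditioning sets interact with the colliders along the path. Because $V_1$ and $V_2$ are d-connected given $\bm S$ in $G$, there is at least one path $p$ from $V_1$ to $V_2$ that is active given $\bm S$. I would build the candidate connecting path between the observed versions by prepending the edge $V_1 \to V_1^o$ and appending $V_2 \to V_2^o$,
\[
p':\quad V_1^o \leftarrow V_1 \;\cdots\; V_2 \to V_2^o ,
\]
whose middle segment is exactly $p$, and then show that $p'$ is active given $\bm S^o$.

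First I would handle the two endpoints of $p$ that become interior vertices of $p'$. At $V_1$ the incident edge on the observed side is $V_1 \to V_1^o$, whose tail sits at $V_1$, so $V_1$ is a non-collider on $p'$ no matter how the first edge of $p$ is oriented; the symmetric statement holds at $V_2$. Since $V_1$ and $V_2$ are error-free variables and $\bm S^o$ contains only observed versions, neither lies in $\bm S^o$, so as non-colliders they do not block $p'$. The same argument handles every non-collider inherited from $p$: it lay outside $\bm S$, and being error-free it also lies outside $\bm S^o$.

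The essential step is the colliders. For each collider $C$ on $p$, the fact that $p$ is active given $\bm S$ means that either $C \in \bm S$ or $C$ has a descendant in $\bm S$, and I would show that $C$ keeps an activating descendant in $\bm S^o$ in either case. If $C \in \bm S$, then its observed child satisfies $C^o \in \bm S^o$, and the edge $C \to C^o$ supplied by the Independence rule makes $C^o$ a descendant of $C$. If instead a directed path $C \to \cdots \to D$ reaches some $D \in \bm S$, then appending $D \to D^o$ extends it to $D^o \in \bm S^o$, again giving $C$ a descendant in $\bm S^o$. Thus every collider on $p'$ stays open under $\bm S^o$.

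Finally I would observe that the conditioning nodes $S_i^o$ are leaves whose only neighbour is their parent $S_i$, so they never occur as interior vertices of $p'$; conditioning on them can only open colliders, as above, and can never block a non-collider. Combining the three observations shows that $p'$ is active given $\bm S^o$, hence $V_1^o$ and $V_2^o$ are d-connected conditional on $\bm S^o$. I expect the collider bookkeeping — specifically, checking that the descendant relations witnessing that $p$ is active under $\bm S$ survive once the observed children are appended — to be the only non-routine part, since the endpoint and non-collider checks follow directly from the observed variables being error-free-variable leaves.
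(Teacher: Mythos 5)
Your proof is correct and takes essentially the same route as the paper's: start from a path $p$ that is active given $\bm S$, argue it remains active given $\bm S^o$, and connect the observed versions by prepending $V_1^o \leftarrow V_1$ and appending $V_2 \to V_2^o$. The only difference is one of rigor — you explicitly justify the step the paper asserts in a single sentence ("$p$ also remains unblocked given $\bm S^o$") by checking that error-free non-colliders can never lie in $\bm S^o$ and that every collider's activating membership or descendant in $\bm S$ yields a descendant in $\bm S^o$ via the edge $S_i \to S_i^o$.
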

\begin{proof}
If $V_1$ and $V_2$ are d-connected given $\bm S$, there must be an unblocked path $p$ between $V_1$ and $V_2$ conditional on $\bm S$. Since $p$ also remains unblocked given $\bm S^o$, $V_1$ and $V_2$ remain d-connected given $\bm S^o$. Thus, $V_1^o$ and $V_2^o$ are also d-connected given $\bm S^o$.
\end{proof}
\begin{theorem}
\label{the: conditional independent}
If two error-free variables $V_1$ and $V_2$ that are unconditionally d-connected become d-separated conditional on a variable set $\bm S$ that contains error-free variables, then the observed noisy variables $V_1^o$ and $V_2^o$ will not be d-separated conditional on the observed noisy variable set $\bm S^o$.
\end{theorem}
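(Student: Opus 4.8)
The plan is to prove the statement constructively: I would exhibit an explicit path between $V_1^o$ and $V_2^o$ that stays active under conditioning on $\bm S^o$, built from a collider-free path between $V_1$ and $V_2$ that $\bm S$ manages to block. First I would exploit the hypothesis that $V_1$ and $V_2$ are \emph{unconditionally} d-connected. With an empty conditioning set a path is active exactly when it contains no collider (every collider blocks when neither it nor a descendant is conditioned), so unconditional d-connection guarantees the existence of at least one collider-free path $p$ joining $V_1$ and $V_2$ in $G$.

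Next I would analyse how $\bm S$ manages to block this particular path. Since $V_1$ and $V_2$ are d-separated given $\bm S$, every path between them, including $p$, is blocked by $\bm S$. Because $p$ carries no collider, the only mechanism that can block it is a non-collider (a chain or fork node) lying on $p$ that belongs to $\bm S$. Hence there is some error-free node $S_k \in \bm S$ sitting as a non-collider on $p$, and it is precisely this node that destroys the d-connection of $V_1$ and $V_2$ when we condition on the error-free set.

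I would then invoke the Independence rule to prepend $V_1^o \leftarrow V_1$ and append $V_2 \rightarrow V_2^o$ to $p$, obtaining a path $p'$ from $V_1^o$ to $V_2^o$. This $p'$ is again collider-free: the two appended edges point \emph{out} of $V_1$ and $V_2$, so $V_1$ and $V_2$ remain non-colliders on $p'$ and the internal structure of $p$ is unchanged. The heart of the argument is that replacing $\bm S$ by $\bm S^o$ cannot re-block $p'$. Every internal node of $p'$ is an error-free variable, whereas $\bm S^o$ consists only of the observed noisy counterparts $S_j^o$, which by the Independence rule are strict descendants (children) of the $S_j$ and branch off the path rather than lie on it; in particular the blocking node $S_k$ is no longer in the conditioning set, while its noisy child $S_k^o$ is. Since the d-separation rule blocks a path at a non-collider \emph{only} when that node itself is conditioned, the descendant $S_k^o$ does not suffice, exactly the collider/non-collider asymmetry shown in Figure~\ref{fig: the effect of measurement error on two different kinds of connection (b)}. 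Thus no node of $p'$ is conditioned, $p'$ is active given $\bm S^o$, and $V_1^o$ and $V_2^o$ are d-connected and therefore not d-separated given $\bm S^o$.

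I expect the main obstacle to be arguing this final step with full rigour, namely that substituting each error-free conditioning variable by its noisy observed child can only ever activate colliders and can never introduce a fresh block on a non-collider. Pinning down that the collider-free path $p$ genuinely survives the substitution, rather than being re-blocked somewhere else, is what carries the conclusion, and it rests essentially on the Independence rule guaranteeing that each $S_j^o$ is a child of $S_j$, hence off the path and unable to act as a non-collider on it.
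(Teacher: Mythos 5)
Your proposal is correct and follows essentially the same route as the paper's own proof: extract a collider-free path $p$ from the unconditional d-connection, note that $\bm S$ can only block it at a non-collider lying on $p$, extend $p$ by $V_1^o \leftarrow V_1$ and $V_2 \rightarrow V_2^o$, and observe that conditioning on the noisy children $\bm S^o$ (which lie off the path) cannot block any non-collider on it. Your write-up is in fact more rigorous than the paper's terse argument, spelling out exactly why the path is collider-free and why a child of a non-collider node never blocks a path, points the paper leaves implicit via its Figure~\ref{fig: the effect of measurement error on two different kinds of connection (b)}.
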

\begin{proof}
When $V_1$ and $V_2$ that are unconditionally d-connected become d-separated conditional on $\bm S$, there must be at least one path $p$ between $V_1$ and $V_2$ that d-separates them conditional on $\bm S$ and this specific path cannot contain common-effect (converging) connections. Therefore, a path $p$ composed of a series of non-converging connections will not d-separate $V_1$ and $V_2$ given a noisy variable set $S^o$ (as shown in Figure~\ref{fig: the effect of measurement error on two different kinds of connection (b)}). Since $V_1$ and $V_2$ are the only parent of $V_1^o$ and $V_2^o$ respectively, $V_1^o$ and $V_2^o$ will also not be d-separated conditional on $\bm S^o$.
\end{proof}
\begin{figure}[H]
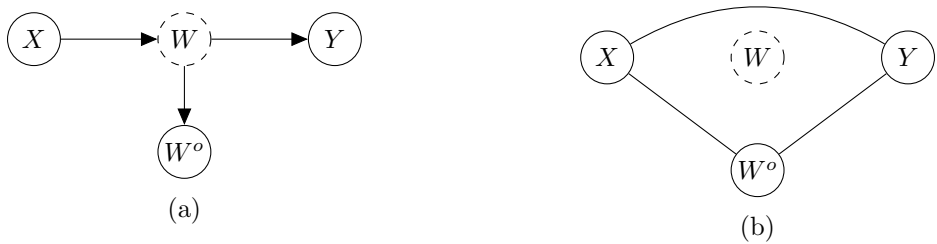
 
\centering
\captionsetup{format=hang}
\begin{adjustbox}{minipage = \linewidth, scale = 1}
\begin{subfigure}{.5\textwidth}
  \centering
  \tikz{
    \node[latent, dashed] (W^*) at (0, 0) {$W$};
    \node[latent] (Y^*) at (2, 0) {$Y$};
    \node[latent] (W) at (0, -1.5) {$W^o$};
    \node[latent] (X^*) at (-2, 0) {$X$};

    \edge {W^*} {W};
    \edge {X^*} {W^*};
    \edge {W^*} {Y^*};
  }
  \caption{}
  \label{fig: the effect of measurement error on constrained-based algorithm (a)}
\end{subfigure}%
\begin{subfigure}{.5\textwidth}
  \centering
  \tikz{
    \node[latent, dashed] (W^*) at (0, 0) {$W$};
    \node[latent] (Y^*) at (2, 0) {$Y$};
    \node[latent] (W) at (0, -1.5) {$W^o$};
    \node[latent] (X^*) at (-2, 0) {$X$};

    \path [-] (X^*) edge [bend left] (Y^*);
    \path [-] (X^*) edge (W);
    \path [-] (W) edge (Y^*);
  }
  \caption{}
  \label{fig: the effect of measurement error on constrained-based algorithm (b)}
\end{subfigure}
\end{adjustbox}
\caption{(a) A BN containing error-free variables $X$ and $Y$, and variable $W$ whose observations are drawn from its noisy version $W^o$ due to the presence of measurement error. (b) The graph learned by applying constraint-based learning to the observed data sampled from $X, Y$ and $W^o$.}
\label{fig: the effect of measurement error on constrained-based algorithm}
\end{figure}
Next, let us consider the impact of measurement error on constraint-based learning. The starting point of algorithms in this class is a fully connected undirected graph. Edges between variables are then removed if any marginal or conditional independence between the two variables are discovered. Consider the simple BN shown in Figure~\ref{fig: the effect of measurement error on constrained-based algorithm (a)} composed by three variables $X, Y$ and $W$, where $X$ and $Y$ are error-free whereas $W$ incorporates measurement error; implying that observations on $W$ are drawn from its noisy version $W^o$.

According to Theorems~\ref{the: unconditional} and~\ref{the: conditional dependent}, and with reference to the example in Figure~\ref{fig: the effect of measurement error on constrained-based algorithm}, the unconditional and conditional dependences between error-free variables $X$ and $W$ extent to their observed versions. Therefore constraint-based learning produces an edge between $X$ and $W^o$ in the graph learned from observed noisy data (and similarly for $W^o$ and $Y$). The only conditional independence relationship amongst the error-free variables is $X\indep Y\mid W$. According to Theorem~\ref{the: conditional independent}, this conditional independence does not hold in the presence of measurement error on $W^o$. Therefore, we get $X\dep Y\mid W^o$ and the incorrect fully connected graph shown in Figure~\ref{fig: the effect of measurement error on constrained-based algorithm (b)} as the learned graph. In other words, the measurement error on an unshielded non-collider misleads constraint-based learning towards a spurious edge between its neighbours, producing a 3-vertex clique. Note that constraint-based learning can reconstruct $X-W-Y$ when the input data does not incorporate measurement error.

We, therefore consider a 3-vertex clique as a signal for the presence of measurement error in at least one of the variables that make up the clique. When a learned graph contains such a clique, we need to determine whether the clique exists in the error-free graph or whether it is the result of measurement error. If we could distinguish between these two possibilities, then we could recover the error-free model from noisy data. This challenge can be viewed as a type of a hidden variable problem. In our case, a potential hidden variable represents the error-free parent of its corresponding observed and potentially noisy version.

While, in practice, Theorems~\ref{the: unconditional},~\ref{the: conditional dependent} and~\ref{the: conditional independent} will not hold for all statistical tests used to explore the d-connection and d-separation scenarios discussed above, they can still help us identify graphical inaccuracies that due to measurement error. The level of accuracy in determining such inaccuracies may critically depend on the rate of error, how it differs per state of a variable, and how it relates to distributional errors in other variables. Figure~\ref{fig: CPDAG learned by PC algorithm from Asia data set} presents an example based on the PC-Stable algorithm and the classic Asia network, with synthetic data of sample size 10,000. Specifically, Figure~\ref{fig: true Asia network} represents the ground true graph, Figure~\ref{fig: cpdag_ef} the learned error-free graph, and Figure \ref{fig: cpdag_me} the learned graph with 5\% measurement error on variable $bronc$, as defined by Equation~\ref{equ: error function}. This relatively small rate of error has led to the spurious edge between $smoke$ and $dysp$. This is because while $smoke$ and $dysp$ are independent conditional on the error-free variables $bronc$ and $either$, this conditional independence is relaxed in the presence of measurement error on variable $bronc$ and hence, the algorithm produces the additional FP edge. Moreover, this additional edge produces the 3-vertex clique $\left\{smoke, bronc, dysp\right\}$ that does not exist in the true graph nor in the error-free learned graph.
\begin{figure}[H]
\centering
\captionsetup{format=hang}
\begin{adjustbox}{minipage = \linewidth, scale = 1}
\begin{subfigure}{.33\textwidth}
  \centering
    \begin{tikzpicture}[scale = 0.8]
        \node[latent, minimum size = 1cm] (asia) at (0, 0) {asia};
        \node[latent, minimum size = 1cm] (tub) at (0, -2) {tub};
        \node[latent, minimum size = 1cm] (either) at (1.25, -4) {either};
        \node[latent, minimum size = 1cm] (lung) at (2.5, -2) {lung};
        \node[latent, minimum size = 1cm] (smoke) at (4, 0) {smoke};
        \node[latent, minimum size = 1cm] (bronc) at (4, -3) {bronc};
        \node[latent, minimum size = 1cm] (dysp) at (4, -6) {dysp};
        \node[latent, minimum size = 1cm] (xray) at (1.25, -6) {xray};
        
        \draw[->] (asia) edge (tub);
        \draw[->] (smoke) edge (lung);
        \draw[->] (smoke) edge (bronc);
        \draw[->] (bronc) edge (dysp);
        \draw[->] (tub) edge (either);
        \draw[->] (lung) edge (either);
        \draw[->] (either) edge (xray);
        \draw[->] (either) edge (dysp);
    \end{tikzpicture}
  \caption{}
  \label{fig: true Asia network}
\end{subfigure}%
\begin{subfigure}{.33\textwidth}
  \centering
    \begin{tikzpicture}[scale = 0.8]
        \node[latent, minimum size = 1cm] (asia) at (0, 0) {asia};
        \node[latent, minimum size = 1cm] (tub) at (0, -2) {tub};
        \node[latent, minimum size = 1cm] (either) at (1.25, -4) {either};
        \node[latent, minimum size = 1cm] (lung) at (2.5, -2) {lung};
        \node[latent, minimum size = 1cm] (smoke) at (4, 0) {smoke};
        \node[latent, minimum size = 1cm] (bronc) at (4, -3) {bronc};
        \node[latent, minimum size = 1cm] (dysp) at (4, -6) {dysp};
        \node[latent, minimum size = 1cm] (xray) at (1.25, -6) {xray};
        
        \draw[-] (asia) edge (tub);
        \draw[-] (smoke) edge (lung);
        \draw[-] (smoke) edge (bronc);
        \draw[-] (bronc) edge (dysp);
        \draw[->] (tub) edge (either);
        \draw[->] (lung) edge (either);
    \end{tikzpicture}
  \caption{}
  \label{fig: cpdag_ef}
\end{subfigure}%
\begin{subfigure}{.33\textwidth}
  \centering
  \begin{tikzpicture}[scale = 0.8]
        \node[latent, minimum size = 1cm] (asia) at (0, 0) {asia};
        \node[latent, minimum size = 1cm] (tub) at (0, -2) {tub};
        \node[latent, minimum size = 1cm] (either) at (1.25, -4) {either};
        \node[latent, minimum size = 1cm] (lung) at (2.5, -2) {lung};
        \node[latent, minimum size = 1cm] (smoke) at (4, 0) {smoke};
        \node[latent, minimum size = 1cm] (bronc) at (4, -3) {bronc};
        \node[latent, minimum size = 1cm] (dysp) at (4, -6) {dysp};
        \node[latent, minimum size = 1cm] (xray) at (1.25, -6) {xray};
        
        \draw[-] (asia) edge (tub);
        \draw[-] (smoke) edge (lung);
        \draw[-] (smoke) edge (bronc);
        \draw[-] (bronc) edge (dysp);
        \draw[->] (tub) edge (either);
        \draw[->] (lung) edge (either);
        \draw[-] (smoke) edge [bend left] (dysp);
    \end{tikzpicture}
  \caption{}
  \label{fig: cpdag_me}
\end{subfigure}
\end{adjustbox}
\caption{(a) The true Asia network. (b) The CPDAG learned by PC-Stable given the error-free synthetic data set. (c) The CPDAG learned by PC-Stable given the same synthetic data set but with 5\% measurement error on variable $bronc$.}
\label{fig: CPDAG learned by PC algorithm from Asia data set}
\end{figure}
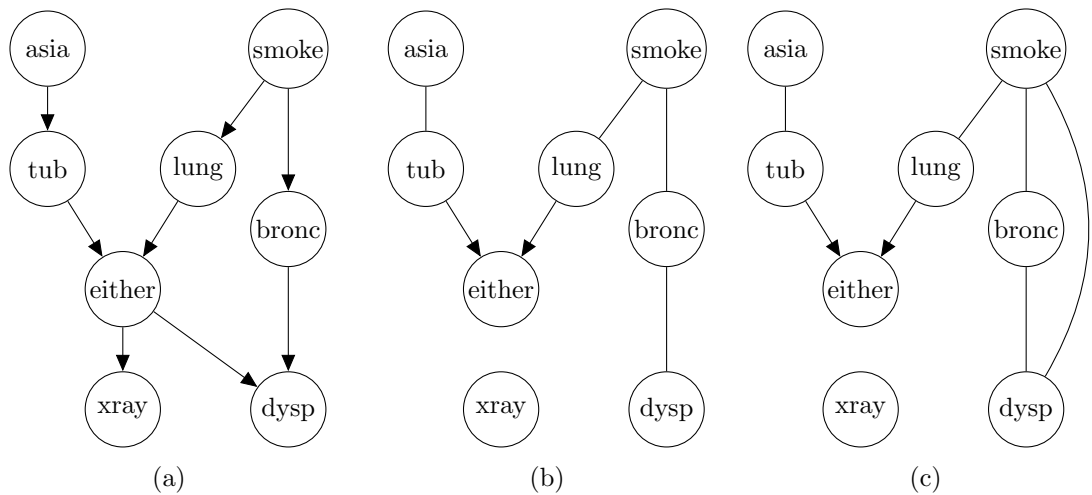
To investigate the impact of measurement error on BN structure learning in general, we have extended these experiments to four algorithms spanning different classes of learning. Namely, in addition to constraint-based PC-stable~\citep{colombo2014order}, to the score-based HC~\citep{bouckaert1994properties} and ILP~\citep{cussens2011bayesian}, and to hybrid H2PC~\citep{gasse2014hybrid}. We have used each of these algorithms to reconstruct 50 randomly generated BNs consisting of 20 Boolean nodes, using the method described in~\citep{ide2002random}. Each random network was used to generate two synthetic data sets of 10,000 sample size each; one error-free data set and another noisy data set with 10\% measurement error on each variable.

Figure~\ref{fig: average number of cliques in learned graphs} compares the average number of 3-vertex cliques produced by each of the algorithms with and without measurement error, and with reference to the average number of 3-vertex cliques present in the ground truth graphs. These initial results show that score-based learning is more sensitive to the measurement error compared to constraint-based learning, and this naturally extends to hybrid learning. These results support our hypothesis that a 3-vertex clique can be viewed as a signal for the presence of measurement error in the input data.
\begin{figure}[H]
    \centering
    \captionsetup{format=hang}
    \includegraphics[width=\linewidth]{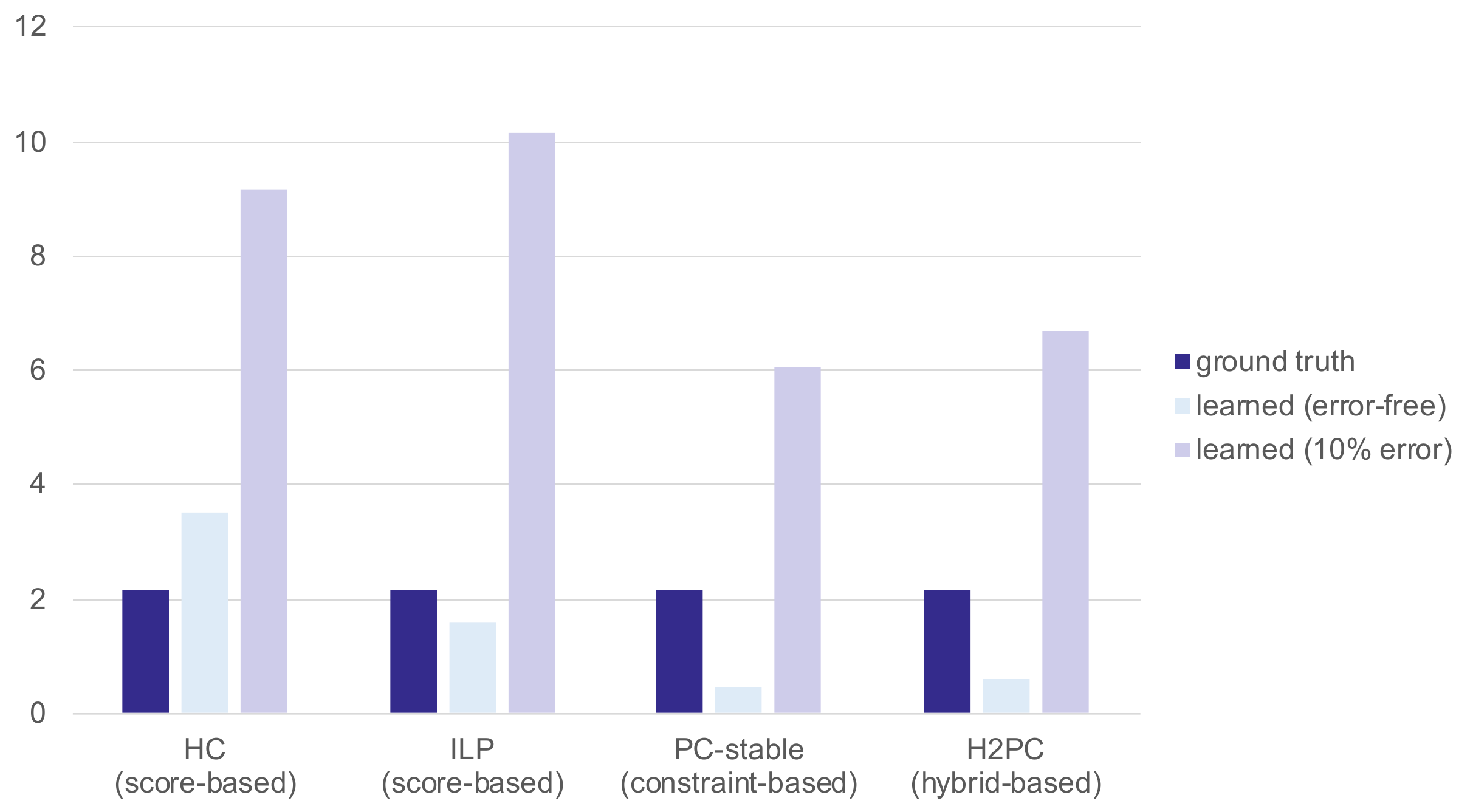}
    \caption{The average number of 3-vertex cliques in the ground truth graphs, the graphs learned from error-free data sets, and the graphs learned from observed data sets with 10\% measurement error on each variable.}
    \label{fig: average number of cliques in learned graphs}
\end{figure}
\section{The Spurious Edge Detection (SED) algorithm}
\label{sec: EM-based correction algorithm}
This section describes the Spurious Edge Detection (SED) algorithm which can be applied to the output graph produced by any other BN structure learning algorithms to discover and eliminate potential FP edges that tend to be the outcome of measurement error. The implementation of SED is available online~\footnote{Our code is publicly available at~\url{https://github.com/Enderlogic/Spurious-Edge-Detection}.}. Further to what has been discussed in Section~\ref{sec: impact of measurement error on structure learning}, SED focuses its search for FP edges on the induced subgraph of 3-vertex cliques and assumes that one of the three edges in such an induced subgraph may be a FP.

We define the Candidate Spurious Edge set $CSE\left(V_i\right)$ for a candidate noisy variable $V_i$ as the set of edges between neighbours of $V_i$, since the existence of these edges might be due to measurement error on $V_i$ (refer to the discussion of Figure{~\ref{fig: the effect of measurement error on constrained-based algorithm}}). The complete $CSE$ contains $CSE\left(V_i\right)$ for all $V_i$ in $G$, i.e., $CSE = \left\{V_i: CSE(V_i)\mid \textrm{for all }V_i\textrm{ in }G\right\}$. For instance, the $CSE$ sets for each variable in Figure{~\ref{fig: CSE example}} are:
\begin{equation*}
    \begin{split}
        CSE\left(A\right): & \left\{B\rightarrow C, B\rightarrow E\right\}\\
        CSE\left(B\right): & \left\{A\rightarrow C, A\rightarrow E, C\rightarrow D\right\}\\
        CSE\left(C\right): & \left\{A\rightarrow B, B\rightarrow D\right\}\\
        CSE\left(D\right): & \left\{B\rightarrow C\right\}\\
        CSE\left(E\right): & \left\{A\rightarrow B\right\}
    \end{split}
\end{equation*}
\begin{figure}[H]
    \centering
    \captionsetup{format=hang}
    \begin{tikzpicture}[scale = 0.8]
        \node[latent] (A) at (0, 0) {A};
        \node[latent] (B) at (2, -1) {B};
        \node[latent] (C) at (4, 0) {C};
        \node[latent] (D) at (4, -2) {D};
        \node[latent] (E) at (0, -2) {E};
        
        \draw[->] (A) edge (B) edge (C) edge (E);
        \draw[->] (B) edge (C) edge (D) edge (E);
        \draw[->] (C) edge (D);
    \end{tikzpicture}
    \caption{An example of a graph that contains multiple 3-vertex cliques}
    \label{fig: CSE example}
\end{figure}
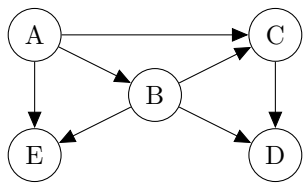
Next, let us revisit the Asia network example in Figure~\ref{fig: cpdag_me} to investigate the possibility of a spurious edge in the presence of a single 3-vertex clique in the learned graph. Recall this is the graph learned by PC-Stable in the presence of 5\% measurement error on variable $bronc$. Since the graph contains a single 3-vertex clique, the $CSE$ sets for each variable of this graph are: 
\begin{equation*}
    \begin{split}
        CSE\left(smoke\right): \left\{bronc-dysp\right\}\\
        CSE\left(bronc\right): \left\{smoke-dysp\right\}\\
        CSE\left(dysp\right): \left\{smoke-bronc\right\}
    \end{split}
\end{equation*}
We then perform three graphical reconstructions, one for each edge in $CSE$ given the Independence rule defined in Section~\ref{sec: preliminaries} to identify and eliminate spurious edges. This process is described in Algorithm~\ref{alg: reconstructing} with inputs a learned graph $G$, a candidate noisy variable $V$, an edge set $E$ contaning candidate spurious edges, and a data set $D$. The output $\Delta$ of Algorithm{~\ref{alg: reconstructing}} represents the difference in BIC score between the reconstructed graph produced given $E$ and $V$, and the input learned graph $G$. Figure~\ref{fig: reconstructed graphs} presents the three reconstructed graphs produced by Algorithm{~\ref{alg: reconstructing}} after setting $G$ as Figure{~\ref{fig: cpdag_me}}, $E$ as a set with one candidate edge in $CSE$ and $V$ as the corresponding noisy variable of $E$. The hidden variable (dashed node) in Figure{~\ref{fig: reconstructed graphs}} represents the unmeasured error-free parent of the candidate noisy variable as described in Algorithm~\ref{alg: reconstructing}. Each of the three reconstructed graphs can be seen as a potential ground truth graph that could explain the suspected spurious edge in $CSE$ under assessment.
\begin{algorithm}[H]
\caption{Graph reconstruction procedure}
\label{alg: reconstructing}
\begin{algorithmic}[1]
\Procedure{Reconstruction}{$G, V, E, D$}

\underline{Input}: graph $G$, variable $V$, edge set $E$, data $D$

\underline{Output}: difference in BIC score between reconstructed graph and input graph $\Delta$
\State Compute the BIC score $score_i$ of the input graph $G$
\State Create a copy of graph $G$ in $G_{r}$
\State Remove edges $E$ in $G_{r}$
\State \parbox[t]{\dimexpr\textwidth-\leftmargin-\labelsep-\labelwidth}{%
Replace the observed variable $V$ in $G_{r}$ with a hidden variable that preserves the state space of $V$\strut}
\State \parbox[t]{\dimexpr\textwidth-\leftmargin-\labelsep-\labelwidth}{%
Reintroduce the observed variable $V$ as the observed and suspected noisy variable $V^o$ in $G_{r}$, as the child of the hidden error-free variable $V$\strut}
\State Compute the BIC score $score_r$ of the reconstructed graph $G_r$
\State $\Delta = score_r - score_i$
\Return $\Delta$
\EndProcedure
\end{algorithmic}
\end{algorithm}
\begin{figure}[H]
\centering
\captionsetup{format=hang}
\begin{adjustbox}{minipage = \linewidth, scale = 0.86}
\begin{subfigure}{.33\textwidth}
  \centering
    \begin{tikzpicture}[scale = 0.8]
                \draw node[latent, minimum size = 1cm] (asia) at (0, 0) {asia};
                \draw node[latent, minimum size = 1cm] (tub) at (0, -2) {tub};
                \draw node[latent, minimum size = 1cm] (either) at (1.25, -4) {either};
                \draw node[latent, minimum size = 1cm] (lung) at (2.25, -2) {lung};
                \draw node[latent, minimum size = 1cm, dashed] (smoke*) at (4, 0) {smoke};
                \draw node[latent, minimum size = 1cm, scale = 0.87] (smoke) at (2, 0) {$\textrm{smoke}^o$};
                \draw node[latent, minimum size = 1cm] (bronc) at (4, -3) {bronc};
                \draw node[latent, minimum size = 1cm] (dysp) at (4, -6) {dysp};
                \draw node[latent, minimum size = 1cm] (xray) at (1.25, -6) {xray};
                
                \draw[-] (asia) edge (tub);
                \draw[-] (smoke*) edge (lung);
                \draw[-] (smoke*) edge (bronc);
                \draw[->] (tub) edge (either);
                \draw[->] (lung) edge (either);
                \draw[->] (smoke*) edge (smoke);
                \draw[-] (smoke*) edge [bend left] (dysp);
            \end{tikzpicture}
  \caption{}
  \label{fig: reconstructed (a)}
\end{subfigure}%
\begin{subfigure}{.33\textwidth}
  \centering
    \begin{tikzpicture}[scale = 0.8]
                \draw node[latent, minimum size = 1cm] (asia) at (0, 0) {asia};
                \draw node[latent, minimum size = 1cm] (tub) at (0, -2) {tub};
                \draw node[latent, minimum size = 1cm] (either) at (1, -4) {either};
                \draw node[latent, minimum size = 1cm] (lung) at (2.5, -2) {lung};
                \draw node[latent, minimum size = 1cm] (smoke) at (4, 0) {smoke};
                \draw node[latent, minimum size = 1cm, dashed] (bronc*) at (4, -3) {bronc};
                \draw node[latent, minimum size = 1cm, scale = 0.9] (bronc) at (5.5, -4.5) {$\textrm{bronc}^o$};
                \draw node[latent, minimum size = 1cm] (dysp) at (4, -6) {dysp};
                \draw node[latent, minimum size = 1cm] (xray) at (1, -6) {xray};
                
                \draw[-] (asia) edge (tub);
                \draw[-] (smoke) edge (lung);
                \draw[-] (smoke) edge (bronc*);
                \draw[-] (bronc*) edge (dysp);
                \draw[->] (tub) edge (either);
                \draw[->] (lung) edge (either);
                \draw[->] (bronc*) edge (bronc);
            \end{tikzpicture}
  \caption{}
  \label{fig: reconstructed (b)}
\end{subfigure}%
\begin{subfigure}{.33\textwidth}
  \centering
    \begin{tikzpicture}[scale = 0.8]
                \draw node[latent, minimum size = 1cm] (asia) at (0, 0) {asia};
                \draw node[latent, minimum size = 1cm] (tub) at (0, -2) {tub};
                \draw node[latent, minimum size = 1cm] (either) at (1, -4) {either};
                \draw node[latent, minimum size = 1cm] (lung) at (2.5, -2) {lung};
                \draw node[latent, minimum size = 1cm] (smoke) at (4, 0) {smoke};
                \draw node[latent, minimum size = 1cm] (bronc) at (4, -3) {bronc};
                \draw node[latent, minimum size = 1cm, dashed] (dysp*) at (4, -6) {dysp};
                \draw node[latent, minimum size = 1cm] (dysp) at (6, -4) {$\textrm{dysp}^o$};
                \draw node[latent, minimum size = 1cm] (xray) at (1, -6) {xray};
                
                \draw[-, thick] (asia) edge (tub);
                \draw[-, thick] (smoke) edge (lung);
                \draw[-, thick] (bronc) edge (dysp*);
                \draw[->, thick] (tub) edge (either);
                \draw[->, thick] (lung) edge (either);
                \draw[-, thick] (smoke) edge [bend left] (dysp*);
                \draw[->, thick] (dysp*) edge (dysp);
            \end{tikzpicture}
  \caption{}
  \label{fig: reconstructed (c)}
\end{subfigure}
\end{adjustbox}
\caption{{The three reconstructed graphs for clique $\left\{bronc, dysp, smoke\right\}$, based on the learned graph in Figure~\ref{fig: cpdag_me}. Dotted nodes represent possible hidden error-free parents of the suspected noisy node under assessment.}}
\label{fig: reconstructed graphs}
\end{figure}
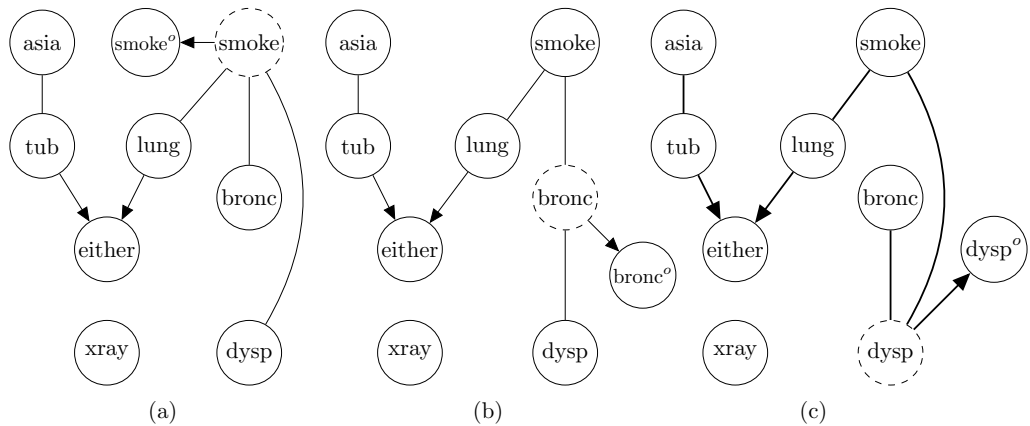
For example, the graph in Figure~\ref{fig: reconstructed (a)} investigates the possibility of variable $smoke$ incorporating measurement error, which is why it is replaced with a hidden unmeasured variable representing its error-free version, with the observed version $smoke^o$ restructured as a child of the hidden variable. Moreover, the edge between $bronc$ and $dysp$ is removed since a possible conditional independence $bronc\indep dysp\mid smoke$ will not hold if variable $smoke$ is indeed noisy, which could explain the presence of clique $\left\{bronc, dysp, smoke\right\}$ in the learned graph shown in Figure~\ref{fig: cpdag_me}. Similarly, Figures~\ref{fig: reconstructed (b)} and~\ref{fig: reconstructed (c)} repeat this process for the remaining two variables in clique $\left\{bronc, dysp, smoke\right\}$.

Each reconstructed graph is evaluated in terms of model selection between the learned and observed distributions using the Bayesian Information Criterion (BIC). Because the reconstructed graphs include an additional hidden variable, we adopt the Expectation-Maximization (EM) learning~\citep{dempster1977maximum} to compute the Log-Likelihood (LL) score of the BIC for each of the reconstructed graphs, and we describe this process in Appendix~\ref{app: EM algorithm adn BIC score}. If at least one reconstructed graph produces a BIC score that is higher than the BIC score produced by the original learned graph, we assume the reconstructed graph with the highest score is a more accurate representation of the underlying ground truth graph, and on this basis we eliminate the spurious edge not present in the optimal reconstructed graph from the original learned graph; otherwise, no modification is made to the original learned graph.

The above example illustrates the process of identifying possible spurious edges when the learned graph contains a single 3-vertex clique. When the learned graph contains multiple such cliques, then it becomes possible that multiple observed variables incorporate measurement error. An optimal assessment of this scenario requires searching for ground truth graph that incorporates multiple hidden error-free variables. However, because the complexity of EM learning grows exponentially with the number of hidden variables~\citep{tembo2016tutorial}, this procedure quickly becomes intractable. We, therefore, iterate through reconstructed graphs that contain a single hidden variable at a time using a heuristic algorithm which we call SED (Spurious Edge Detection). The pseudocode of the SED algorithm is presented in Algorithm~\ref{alg: SED algorithm}.

The SED algorithm is an iterative process that searches for spurious edges by recursively executing two phases, and produces a modified graph that does not contain the edges identified as FP edges. The first phase involves searching for the most likely spurious edge and its corresponding noisy variable amongst all candidate edges, whereas the second phase involves searching for further possible spurious edges that associate with the noisy variable selected in the first phase. Specifically, SED initialises the modified graph $G_{mod}$ from the original learned graph $G$ taken as an input, and generates the $CSE$ from $G$. SED then initiates Phase 1 by applying the graph reconstruction procedure (Algorithm~\ref{alg: reconstructing}) on every edge and its corresponding variable in $CSE$. Note that the same edge may appear more than once in multiple $CSE\left\{V_i\right\}$, implying that Phase 1 assesses the same edge under each of those conditions that could explain the edge as a spurious edge. If at least one edge $E$ in $CSE$ is found to improve BIC compared to the BIC of the input graph $G$, then SED removes the edge $E_m$ in $G_{mod}$ that produces the highest BIC and records its corresponding variable $V_m$ as noisy. To avoid considering multiple spurious edges that are from the induced subgraph of the same 3-vertex clique, the $CSE$ set is also updated given the current modified graph $G_{mod}$. If $CSE\left(V_m\right)$ is not empty, then SED enters Phase 2 by searching for further potential spurious edges caused by noisy $V_m$.

The exploration in Phase 2 continues to be based on the noisy variable recorded in Phase 1. Phase 2 attributes the existence of multiple candidate spurious edges with the same noisy variable to measurement error, to avoid simulating multiple hidden error-free variables in the reconstructed graph. Specifically, Phase 2 initialises an edge set $E_d$ that holds the spurious edge $E_m$ discovered in Phase 1, along with a threshold $\Delta_{MAX}$ which represents the highest output of the reconstruction procedure in Phase 1; i.e., the difference in BIC score between the optimal reconstructed graph and the input learned graph. Then, Phase 2 repeats the reconstruction procedure on each edge $E$ in $CSE\left(V_m\right)$ by setting $E_d\cup \left\{E\right\}$ as the input edge set and $V_m$ as the input variable. Since the reconstructed graph produced in Phase 2 excludes not only the candidate edge $E$ but also the previously discovered spurious edges $E_d$, an additional edge is considered to be spurious if the reconstructed graph formed by $E_d\cup \left\{E\right\}$ further improves the BIC score in relation to the reconstructed graph formed by $E_d$ (i.e., returning a higher $\Delta$ than $\Delta_{MAX}$). If at least one edge satisfies this condition, SED identifies the edge that maximises $\Delta$ as spurious, and updates $G_{mod}$, $CSE$, $E_d$ and $\Delta_{MAX}$ accordingly. Phase 2 repeats the above process until no further edges in $CSE\left(V_m\right)$ can be identified as spurious, at which point $CSE\left(V_m\right)$ is removed from $CSE$. The SED algorithm then reverts to the Phase 1 to explore the next candidate spurious edge and its corresponding noisy variable in $CSE$.

Table{~\ref{tab: an example trace of the SED algorithm}} presents a series of figures that illustrate the outputs generated at various steps of the SED algorithm, by applying SED on the graph shown in Figure{~\ref{fig: original learned graph}}, which represents the Asia network learned by the HC algorithm from a synthetic data set with 10,000 samples and 5\% measurement error on each observed variable. The figures on the leftmost column represent the original graph and modified graphs after each step of SED, whereas the other figures illustrate the reconstructed process for each of the specified steps in SED. The text under each reconstructed graph indicates the input parameters, and the graphs highlighted in red represent the optimal reconstructed graphs selected in every step of SED to identify and eliminate spurious edges.
\begin{algorithm}[H]
\caption{Spurious Edge Detection (SED) algorithm}
\label{alg: SED algorithm}
\begin{algorithmic}[1]
\Procedure{SED}{$G, D$}

\underline{Input}: learned graph $G$, data set $D$

\underline{Output}: modified graph $G_{mod}$
\State $G_{mod} = G$
\State initialise $CSE$ from $G$
\Repeat
\LineComment{\% Phase 1: search spurious edge in $CSE$}
\If{$\underset{V\in CSE, E\in CSE(V)}{\text{max}}Reconstruction\left(G, V, \left\{E\right\}, D\right) > 0$}
\State $V_m, E_m = \underset{V\in CSE, E\in CSE(V)}{\text{argmax}}Reconstruction\left(G, V, \left\{E\right\}, D\right)$
\State $G_{mod} = G_{mod}\backslash E_m$
\State update $CSE$ given $G_{mod}$
\EndIf
\LineComment{\% Phase 2: search further spurious edges in $CSE\left(V_m\right)$}
\If{$CSE\left(V_m\right)\neq\emptyset$}
\State $E_d = \left\{E_m\right\}$
\State $\Delta_{MAX} = Reconstruction\left(G, V_m, \left\{E_m\right\}, D\right)$
\While{$\underset{E \in CSE(V_m)}{\text{max}}Reconstruction\left(G, V_m, E_d\cup\left\{E\right\}, D\right) > \Delta$}
\State $E_m = \underset{E \in CSE(V_m)}{\text{argmax}}Reconstruction\left(G, V_m, E_d\cup\left\{E\right\}, D\right)$
\State $G_{mod} = G_{mod}\backslash E_m$
\State update $CSE$ given $G_{mod}$
\State $E_d = E_d\cup \left\{E_m\right\}$
\State $\Delta_{MAX} = Reconstruction\left(G, V_m, E_d, D\right)$
\EndWhile
\State remove $CSE\left(V_m\right)$ from $CSE$
\EndIf
\Until{$G_{mod}$ is unchanged}
\Return $G_{mod}$
\EndProcedure
\end{algorithmic}
\end{algorithm}
The process starts at Figure~\ref{fig: original learned graph} which represents the original learned graph; i.e., the output of a structure learning algorithm. From this, SED obtains the candidate spurious edge set $CSE$ and enters Phase 1 where it executes the reconstruction procedure on each candidate edge, and removes the edge found to produce the highest $\Delta$ from the reconstruction procedure ($tub\rightarrow xray$ in this example). It then modifies the graph as shown in Figure{~{\ref{fig: modified graph after 1st round Phase 1}}}, from the graph highlighted in red in the first step. Note that Phase 1 also determined variable $either$ as a noisy variable on the basis that the eliminated edge $tub\rightarrow xray$ is explained by measurement error in $either$.

After the variable $either$ is identified as noisy, SED enters Phase 2 to investigate additional edges that might be spurious due to the noise present in this variable. In this example, there are further three edges in $CSE\left(either\right): \left\{smoke\rightarrow lung, lung\rightarrow dysp, xray\rightarrow dysp\right\}$ that could be explained by noisy $either$. SED examines each of these suspected spurious edges in $CSE\left(either\right)$, which removes one-by-one if they are found to further increase $\Delta$ compared with $\Delta_{MAX}$, where $\Delta_{MAX}$ is initialised by the highest $\Delta$ in the current round of Phase 1 and then updated by the highest $\Delta$ in the previous iteration of Phase 2. After two iterations in Phase 2, two spurious edges $lung\rightarrow dysp$ and $xray\rightarrow dysp$ can be detected from $CSE\left(either\right)$ since we are able to discover a better fitting model that attributes the presence of these two edges to the measurement error on the observation of $either$, thereby we get Figures{~\ref{fig: modified graph after 1st iteration of 1st round Phase 2}} and{~\ref{fig: modified graph after 2nd iteration of 1st round Phase 2}}. Note that, the input edge set of the reconstruction procedure in Phase 2 always contains the discovered spurious edges with the same corresponding noisy variable. Therefore, in the first iteration of Phase 2, the input edge set always contains the edge $tub\rightarrow xray$, and in the second iteration of Phase 2, the input edge set always contains edges $tub\rightarrow xray$ and $lung\rightarrow dysp$. Since no more edges in $CSE\left(either\right)$ can be detected as spurious in the third iteration of Phase 2, SED returns to Phase 1 to search spurious edges on the remaining $CSE$ set and the last spurious edge $smoke\rightarrow either$ could be identified at this time. As a result, SED updates the modified graph as shown in Figure~\ref{fig: modified graph after 2nd round Phase 1} which is also the final output of SED since there are no 3-vertex clique in it.
\setlength\LTleft{-1.5cm}
\begin{longtable}{>{\centering\arraybackslash}m{4.2cm}>{\centering\arraybackslash}m{3.7cm}>{\centering\arraybackslash}m{3.7cm}>{\centering\arraybackslash}m{3.7cm}>{\centering\arraybackslash}m{1cm}}
\captionsetup{format = hang}
        \toprule[1pt]
        \multirow{2}{*}[-1.5mm]{\parbox{4.5cm}{
            \centering
            \adjustbox{max totalheight = 4.5cm}{\begin{tikzpicture}[baseline = 0, scale = 0.85]
                \node[latent, minimum size = 1.2cm] (A) at (0, 0) {asia};
                \node[latent, minimum size = 1.2cm] (T) at (0, -2) {tub};
                \node[latent, minimum size = 1.2cm] (E) at (1, -4) {either};
                \node[latent, minimum size = 1.2cm] (L) at (2.5, -2) {lung};
                \node[latent, minimum size = 1.2cm] (S) at (4, 0) {smoke};
                \node[latent, minimum size = 1.2cm] (B) at (4, -3) {bronc};
                \node[latent, minimum size = 1.2cm] (D) at (4, -6) {dysp};
                \node[latent, minimum size = 1.2cm] (X) at (1, -6) {xray};
                \draw[->] (A) edge (T);
                \draw[->] (T) edge [bend right] (X) edge (E);
                \draw[->] (S) edge (L) edge (B) edge [bend right] (E);
                \draw[->] (B) edge (D);
                \draw[->] (L) edge (E) edge (D);
                \draw[->] (E) edge (D) edge (X);
                \draw[->] (X) edge (D);
                \end{tikzpicture}}
                \figcaption{original learned (input) graph}
                \label{fig: original learned graph}}} & \multicolumn{4}{c}{Reconstructed graphs formed in $1^{\textrm{st}}$ round Phase 1}\\
        \cmidrule(lr){2-5}
         & \shortstack{\adjustbox{max totalheight = 4.5cm}{\begin{tikzpicture}[baseline=0, scale = 0.85]
            \node[latent, minimum size = 1.2cm] (A) at (0, 0) {asia};
            \node[latent, minimum size = 1.2cm, dashed] (T) at (0, -2) {tub};
            \node[latent, minimum size = 1.2cm] (To) at (2, 0) {$\textrm{tub}^\textrm{o}$};
            \node[latent, minimum size = 1.2cm] (E) at (1, -4) {either};
            \node[latent, minimum size = 1.2cm] (L) at (2.5, -2) {lung};
            \node[latent, minimum size = 1.2cm] (S) at (4, 0) {smoke};
            \node[latent, minimum size = 1.2cm] (B) at (4, -3) {bronc};
            \node[latent, minimum size = 1.2cm] (D) at (4, -6) {dysp};
            \node[latent, minimum size = 1.2cm] (X) at (1, -6) {xray};
            \draw[->] (A) edge (T);
            \draw[->] (T) edge [bend right] (X) edge (To);
            \draw[->] (S) edge (L) edge (B);
            \draw[->] (S) edge [bend right] (E);
            \draw[->] (B) edge (D);
            \draw[->] (T) edge (E);
            \draw[->] (L) edge (E) edge (D);
            \draw[->] (E) edge (D);
            \draw[->] (X) edge (D);
        \end{tikzpicture}}\\ $V = tub$\\ $E = either\rightarrow xray$} & \tikzmarkin{a}(0.1, 0.67)(-0.1, 5.29)\shortstack{\adjustbox{max totalheight = 4.5cm}{\begin{tikzpicture}[baseline=0, scale = 0.85]
            \draw node[latent, minimum size = 1.2cm] (A) at (0, 0) {asia};
            \draw node[latent, minimum size = 1.2cm] (T) at (0, -2) {tub};
            \draw node[latent, minimum size = 1.2cm, dashed] (E) at (1, -4) {either};
            \draw node[latent, minimum size = 1.2cm] (Eo) at (-1, -6) {$\textrm{either}^\textrm{o}$};
            \draw node[latent, minimum size = 1.2cm] (L) at (2.5, -2) {lung};
            \draw node[latent, minimum size = 1.2cm] (S) at (4, 0) {smoke};
            \draw node[latent, minimum size = 1.2cm] (B) at (4, -3) {bronc};
            \draw node[latent, minimum size = 1.2cm] (D) at (4, -6) {dysp};
            \draw node[latent, minimum size = 1.2cm] (X) at (1, -6) {xray};
            \draw[->] (A) edge (T);
            \draw[->] (T) edge (E);
            \draw[->] (S) edge (L) edge (B) edge [bend right] (E);
            \draw[->] (B) edge (D);
            \draw[->] (L) edge (E) edge (D);
            \draw[->] (E) edge (X) edge (D) edge (Eo);
            \draw[->] (X) edge (D);
     \end{tikzpicture}}\\ $V = either$\\ $E = tub\rightarrow xray$}\tikzmarkend{a} & \shortstack{\adjustbox{max totalheight = 4.5cm}{\begin{tikzpicture}[baseline=0, scale = 0.85]
            \draw node[latent, minimum size = 1.2cm] (A) at (0, 0) {asia};
            \draw node[latent, minimum size = 1.2cm] (T) at (0, -2) {tub};
            \draw node[latent, minimum size = 1.2cm, dashed] (E) at (1, -4) {either};
            \draw node[latent, minimum size = 1.2cm] (Eo) at (-1, -6) {$\textrm{either}^\textrm{o}$};
            \draw node[latent, minimum size = 1.2cm] (L) at (2.5, -2) {lung};
            \draw node[latent, minimum size = 1.2cm] (S) at (4, 0) {smoke};
            \draw node[latent, minimum size = 1.2cm] (B) at (4, -3) {bronc};
            \draw node[latent, minimum size = 1.2cm] (D) at (4, -6) {dysp};
            \draw node[latent, minimum size = 1.2cm] (X) at (1, -6) {xray};
            \draw[->] (A) edge (T);
            \draw[->] (T) edge (E) edge [bend right] (X);
            \draw[->] (S) edge (B) edge [bend right] (E);
            \draw[->] (B) edge (D);
            \draw[->] (L) edge (E) edge (D);
            \draw[->] (E) edge (X) edge (D) edge (Eo);
            \draw[->] (X) edge (D);
     \end{tikzpicture}}\\ $V = either$\\ $E = smoke\rightarrow lung$} & \multirow{1}{*}[0.5cm]{$\cdots\cdots$}\\
     \cmidrule(lr){1-5}
     \multirow{2}{*}[-4mm]{\parbox{4.5cm}{
        \centering
        \adjustbox{max totalheight = 4.5cm}{\begin{tikzpicture}[baseline = 0, scale = 0.85]
            \node[latent, minimum size = 1.2cm] (A) at (0, 0) {asia};
            \node[latent, minimum size = 1.2cm] (T) at (0, -2) {tub};
            \node[latent, minimum size = 1.2cm] (E) at (1, -4) {either};
            \node[latent, minimum size = 1.2cm] (L) at (2.5, -2) {lung};
            \node[latent, minimum size = 1.2cm] (S) at (4, 0) {smoke};
            \node[latent, minimum size = 1.2cm] (B) at (4, -3) {bronc};
            \node[latent, minimum size = 1.2cm] (D) at (4, -6) {dysp};
            \node[latent, minimum size = 1.2cm] (X) at (1, -6) {xray};
            \draw[->] (A) edge (T);
            \draw[->] (T) edge (E);
            \draw[->] (S) edge (L) edge (B) edge [bend right] (E);
            \draw[->] (B) edge (D);
            \draw[->] (L) edge (E) edge (D);
            \draw[->] (E) edge (D) edge (X);
            \draw[->] (X) edge (D);
            \end{tikzpicture}}
        \figcaption{modified graph after $1^\textrm{st}$ round Phase 1}
        \label{fig: modified graph after 1st round Phase 1}}} &
    \multicolumn{4}{c}{Reconstructed graphs formed in $1^{\textrm{st}}$ iteration of $1^{\textrm{st}}$ round Phase 2}\\
     \cmidrule(lr){2-5}
      & \shortstack{\adjustbox{max totalheight = 4.5cm}{\begin{tikzpicture}[baseline=0, scale = 0.85]
            \node[latent, minimum size = 1.2cm] (A) at (0, 0) {asia};
            \node[latent, minimum size = 1.2cm] (T) at (0, -2) {tub};
            \node[latent, minimum size = 1.2cm, dashed] (E) at (1, -4) {either};
            \node[latent, minimum size = 1.2cm] (Eo) at (-1, -6) {$\textrm{either}^\textrm{o}$};
            \node[latent, minimum size = 1.2cm] (L) at (2.5, -2) {lung};
            \node[latent, minimum size = 1.2cm] (S) at (4, 0) {smoke};
            \node[latent, minimum size = 1.2cm] (B) at (4, -3) {bronc};
            \node[latent, minimum size = 1.2cm] (D) at (4, -6) {dysp};
            \node[latent, minimum size = 1.2cm] (X) at (1, -6) {xray};
            \draw[->] (A) edge (T);
            \draw[->] (T) edge (E);
            \draw[->] (S) edge (B) edge [bend right] (E);
            \draw[->] (B) edge (D);
            \draw[->] (L) edge (E) edge (D);
            \draw[->] (E) edge (X) edge (D) edge (Eo);
            \draw[->] (X) edge (D);
        \end{tikzpicture}}\\ $V_m = either$\\ $E = smoke\rightarrow lung$\\ $E_d = \left\{tub\rightarrow xray\right\}$} & \tikzmarkin{b}(0.1, 1.2)(-0.1, 5.82)\shortstack{\adjustbox{max totalheight = 4.5cm}{\begin{tikzpicture}[baseline=0, scale = 0.85]
            \draw node[latent, minimum size = 1.2cm] (A) at (0, 0) {asia};
            \draw node[latent, minimum size = 1.2cm] (T) at (0, -2) {tub};
            \draw node[latent, minimum size = 1.2cm, dashed] (E) at (1, -4) {either};
            \draw node[latent, minimum size = 1.2cm] (Eo) at (-1, -6) {$\textrm{either}^\textrm{o}$};
            \draw node[latent, minimum size = 1.2cm] (L) at (2.5, -2) {lung};
            \draw node[latent, minimum size = 1.2cm] (S) at (4, 0) {smoke};
            \draw node[latent, minimum size = 1.2cm] (B) at (4, -3) {bronc};
            \draw node[latent, minimum size = 1.2cm] (D) at (4, -6) {dysp};
            \draw node[latent, minimum size = 1.2cm] (X) at (1, -6) {xray};
            \draw[->] (A) edge (T);
            \draw[->] (T) edge (E);
            \draw[->] (S) edge (B) edge [bend right] (E) edge (L);
            \draw[->] (B) edge (D);
            \draw[->] (L) edge (E);
            \draw[->] (E) edge (X) edge (D) edge (Eo);
            \draw[->] (X) edge (D);
        \end{tikzpicture}}\\ $V_m = either$\\ $E = lung\rightarrow dysp$\\ $E_d = \left\{tub\rightarrow xray\right\}$}\tikzmarkend{b} & \shortstack{\adjustbox{max totalheight = 4.5cm}{\begin{tikzpicture}[baseline=0, scale = 0.85]
            \draw node[latent, minimum size = 1.2cm] (A) at (0, 0) {asia};
            \draw node[latent, minimum size = 1.2cm] (T) at (0, -2) {tub};
            \draw node[latent, minimum size = 1.2cm, dashed] (E) at (1, -4) {either};
            \draw node[latent, minimum size = 1.2cm] (Eo) at (-1, -6) {$\textrm{either}^\textrm{o}$};
            \draw node[latent, minimum size = 1.2cm] (L) at (2.5, -2) {lung};
            \draw node[latent, minimum size = 1.2cm] (S) at (4, 0) {smoke};
            \draw node[latent, minimum size = 1.2cm] (B) at (4, -3) {bronc};
            \draw node[latent, minimum size = 1.2cm] (D) at (4, -6) {dysp};
            \draw node[latent, minimum size = 1.2cm] (X) at (1, -6) {xray};
            \draw[->] (A) edge (T);
            \draw[->] (T) edge (E);
            \draw[->] (S) edge (B) edge [bend right] (E) edge (L);
            \draw[->] (B) edge (D);
            \draw[->] (L) edge (E) edge (D);
            \draw[->] (E) edge (X) edge (D) edge (Eo);
        \end{tikzpicture}}\\ $V_m = either$\\ $E = xray\rightarrow dysp$\\ $E_d = \left\{tub\rightarrow xray\right\}$}\\
        \cmidrule(lr){1-5}
        \newpage
        \cmidrule(lr){1-5}
        \multirow{2}{*}[-1mm]{\parbox{4.5cm}{
        \centering
        \adjustbox{max totalheight = 4.5cm}{\begin{tikzpicture}[baseline = 0, scale = 0.85]
            \node[latent, minimum size = 1.2cm] (A) at (0, 0) {asia};
            \node[latent, minimum size = 1.2cm] (T) at (0, -2) {tub};
            \node[latent, minimum size = 1.2cm] (E) at (1, -4) {either};
            \node[latent, minimum size = 1.2cm] (L) at (2.5, -2) {lung};
            \node[latent, minimum size = 1.2cm] (S) at (4, 0) {smoke};
            \node[latent, minimum size = 1.2cm] (B) at (4, -3) {bronc};
            \node[latent, minimum size = 1.2cm] (D) at (4, -6) {dysp};
            \node[latent, minimum size = 1.2cm] (X) at (1, -6) {xray};
            \draw[->] (A) edge (T);
            \draw[->] (T) edge (E);
            \draw[->] (S) edge (L) edge (B) edge [bend right] (E);
            \draw[->] (B) edge (D);
            \draw[->] (L) edge (E);
            \draw[->] (E) edge (D) edge (X);
            \draw[->] (X) edge (D);
            \end{tikzpicture}}
        \figcaption{modified graph after $1^\textrm{st}$ iteration of $1^\textrm{st}$ round Phase 2}
        \label{fig: modified graph after 1st iteration of 1st round Phase 2}}} & \multicolumn{4}{c}{Reconstructed graphs formed in $2^{\textrm{nd}}$ iteration of $1^{\textrm{st}}$ round Phase 2}\\
        \cmidrule(lr){2-5}
        &
        \multicolumn{1}{M{3cm}}{\shortstack{\adjustbox{max totalheight = 4.5cm}{\begin{tikzpicture}[baseline=0, scale = 0.85]
            \draw node[latent, minimum size = 1.2cm] (A) at (0, 0) {asia};
            \draw node[latent, minimum size = 1.2cm] (T) at (0, -2) {tub};
            \draw node[latent, minimum size = 1.2cm, dashed] (E) at (1, -4) {either};
            \draw node[latent, minimum size = 1.2cm] (Eo) at (-1, -6) {$\textrm{either}^\textrm{o}$};
            \draw node[latent, minimum size = 1.2cm] (L) at (2.5, -2) {lung};
            \draw node[latent, minimum size = 1.2cm] (S) at (4, 0) {smoke};
            \draw node[latent, minimum size = 1.2cm] (B) at (4, -3) {bronc};
            \draw node[latent, minimum size = 1.2cm] (D) at (4, -6) {dysp};
            \draw node[latent, minimum size = 1.2cm] (X) at (1, -6) {xray};
            \draw[->] (A) edge (T);
            \draw[->] (T) edge (E);
            \draw[->] (S) edge (B) edge [bend right] (E);
            \draw[->] (B) edge (D);
            \draw[->] (L) edge (E);
            \draw[->] (E) edge (X) edge (D) edge (Eo);
            \draw[->] (X) edge (D);
        \end{tikzpicture}}\\ $V_m = either$\\ $E = smoke\rightarrow lung$\\ $E_d = \left\{tub\rightarrow xray, lung\rightarrow dysp\right\}$}} & \multicolumn{3}{M{7cm}}{\tikzmarkin{c}(-0.75, 1.2)(0.75, 5.82)\shortstack{\adjustbox{max totalheight = 4.5cm}{\begin{tikzpicture}[baseline=0, scale = 0.85]
            \draw node[latent, minimum size = 1.2cm] (A) at (0, 0) {asia};
            \draw node[latent, minimum size = 1.2cm] (T) at (0, -2) {tub};
            \draw node[latent, minimum size = 1.2cm, dashed] (E) at (1, -4) {either};
            \draw node[latent, minimum size = 1.2cm] (Eo) at (-1, -6) {$\textrm{either}^\textrm{o}$};
            \draw node[latent, minimum size = 1.2cm] (L) at (2.5, -2) {lung};
            \draw node[latent, minimum size = 1.2cm] (S) at (4, 0) {smoke};
            \draw node[latent, minimum size = 1.2cm] (B) at (4, -3) {bronc};
            \draw node[latent, minimum size = 1.2cm] (D) at (4, -6) {dysp};
            \draw node[latent, minimum size = 1.2cm] (X) at (1, -6) {xray};
            \draw[->] (A) edge (T);
            \draw[->] (T) edge (E);
            \draw[->] (S) edge (B) edge (L) edge [bend right] (E);
            \draw[->] (B) edge (D);
            \draw[->] (L) edge (E);
            \draw[->] (E) edge (X) edge (D) edge (Eo);
        \end{tikzpicture}}\\ $V_m = either$\\ $E = xray\rightarrow dysp$\\ $E_d = \left\{tub\rightarrow xray, lung\rightarrow dysp\right\}$}\tikzmarkend{c}}\\
        \cmidrule(lr){1-5}
        \multirow{2}{*}[-2mm]{\parbox{4.5cm}{
        \centering
        \adjustbox{max totalheight = 4.5cm}{\begin{tikzpicture}[baseline = 0, scale = 0.85]
            \node[latent, minimum size = 1.2cm] (A) at (0, 0) {asia};
            \node[latent, minimum size = 1.2cm] (T) at (0, -2) {tub};
            \node[latent, minimum size = 1.2cm] (E) at (1, -4) {either};
            \node[latent, minimum size = 1.2cm] (L) at (2.5, -2) {lung};
            \node[latent, minimum size = 1.2cm] (S) at (4, 0) {smoke};
            \node[latent, minimum size = 1.2cm] (B) at (4, -3) {bronc};
            \node[latent, minimum size = 1.2cm] (D) at (4, -6) {dysp};
            \node[latent, minimum size = 1.2cm] (X) at (1, -6) {xray};
            \draw[->] (A) edge (T);
            \draw[->] (T) edge (E);
            \draw[->] (S) edge (L) edge (B) edge [bend right] (E);
            \draw[->] (B) edge (D);
            \draw[->] (L) edge (E);
            \draw[->] (E) edge (D) edge (X);
            \end{tikzpicture}}
        \figcaption{modified graph after $2^\textrm{nd}$ iteration of $1^\textrm{st}$ round Phase 2}
        \label{fig: modified graph after 2nd iteration of 1st round Phase 2}}} & \multicolumn{4}{c}{Reconstructed graph formed in $3^\textrm{rd}$ iteration of $1^\textrm{st}$ round Phase 2}\\
        \cmidrule(lr){2-5} &
        \multicolumn{2}{M{5cm}}{\shortstack{\adjustbox{max totalheight = 4.5cm}{\begin{tikzpicture}[baseline=0, scale = 0.85]
            \draw node[latent, minimum size = 1.2cm] (A) at (0, 0) {asia};
            \draw node[latent, minimum size = 1.2cm] (T) at (0, -2) {tub};
            \draw node[latent, minimum size = 1.2cm, dashed] (E) at (1, -4) {either};
            \draw node[latent, minimum size = 1.2cm] (Eo) at (-1, -6) {$\textrm{either}^\textrm{o}$}; 
            \draw node[latent, minimum size = 1.2cm] (L) at (2.5, -2) {lung};
            \draw node[latent, minimum size = 1.2cm] (S) at (4, 0) {smoke};
            \draw node[latent, minimum size = 1.2cm] (B) at (4, -3) {bronc};
            \draw node[latent, minimum size = 1.2cm] (D) at (4, -6) {dysp};
            \draw node[latent, minimum size = 1.2cm] (X) at (1, -6) {xray};
            \draw[->] (A) edge (T);
            \draw[->] (T) edge (E);
            \draw[->] (S) edge (B) edge [bend right] (E);
            \draw[->] (B) edge (D);
            \draw[->] (L) edge (E);
            \draw[->] (E) edge (X) edge (D) edge (Eo);
        \end{tikzpicture}}\\ $V_m = either$\\ $E = smoke\rightarrow lung$\\ $E_d = \left\{tub\rightarrow xray, lung\rightarrow dysp, xray\rightarrow dysp\right\}$}} & &\\
        \cmidrule(lr){1-5}
        \multirow{2}{*}{\parbox{4.5cm}{
        \centering
        \adjustbox{max totalheight = 4.5cm}{\begin{tikzpicture}[baseline = 0, scale = 0.85]
            \node[latent, minimum size = 1.2cm] (A) at (0, 0) {asia};
            \node[latent, minimum size = 1.2cm] (T) at (0, -2) {tub};
            \node[latent, minimum size = 1.2cm] (E) at (1, -4) {either};
            \node[latent, minimum size = 1.2cm] (L) at (2.5, -2) {lung};
            \node[latent, minimum size = 1.2cm] (S) at (4, 0) {smoke};
            \node[latent, minimum size = 1.2cm] (B) at (4, -3) {bronc};
            \node[latent, minimum size = 1.2cm] (D) at (4, -6) {dysp};
            \node[latent, minimum size = 1.2cm] (X) at (1, -6) {xray};
            \draw[->] (A) edge (T);
            \draw[->] (T) edge (E);
            \draw[->] (S) edge (L) edge (B) edge [bend right] (E);
            \draw[->] (B) edge (D);
            \draw[->] (L) edge (E);
            \draw[->] (E) edge (D) edge (X);
            \end{tikzpicture}}
        \figcaption{modified graph after $3^\textrm{rd}$ iteration of $1^\textrm{st}$ round Phase 2}
        \label{fig: modified graph after 3rd iteration of 1st round Phase 2}}} & \multicolumn{4}{c}{Reconstructed graphs formed in $2^\textrm{nd}$ round Phase 1}\\
        \cmidrule(lr){2-5} & \multicolumn{1}{M{2cm}}{\shortstack{\adjustbox{max totalheight = 4.5cm}{\begin{tikzpicture}[baseline=0, scale = 0.85]
            \draw node[latent, minimum size = 1.2cm] (A) at (0, 0) {asia};
            \draw node[latent, minimum size = 1.2cm] (T) at (0, -2) {tub};
            \draw node[latent, minimum size = 1.2cm] (E) at (1, -4) {either};
            \draw node[latent, minimum size = 1.2cm] (L) at (2.5, -2) {lung};
            \draw node[latent, minimum size = 1.2cm, dashed] (S) at (4, 0) {smoke};
            \draw node[latent, minimum size = 1.2cm] (So) at (2, 0) {$\textrm{smoke}^\textrm{o}$};
            \draw node[latent, minimum size = 1.2cm] (B) at (4, -3) {bronc};
            \draw node[latent, minimum size = 1.2cm] (D) at (4, -6) {dysp};
            \draw node[latent, minimum size = 1.2cm] (X) at (1, -6) {xray};
            \draw[->] (A) edge (T);
            \draw[->] (T) edge (E) edge [bend right] (X);
            \draw[->] (S) edge (L) edge (B) edge [bend right] (E) edge (So);
            \draw[->] (B) edge (D);
            \draw[->] (L) edge (D);
            \draw[->] (E) edge (X) edge (D);
            \draw[->] (X) edge (D);
        \end{tikzpicture}}\\ $V = smoke$\\ $E = lung\rightarrow either$}} & \multicolumn{1}{M{1cm}}{\tikzmarkin{d}(0.1, 0.75)(-0.1, 5.37)\shortstack{\adjustbox{max totalheight = 4.5cm}{\begin{tikzpicture}[baseline=0, scale = 0.85]
            \draw node[latent, minimum size = 1.2cm] (A) at (0, 0) {asia};
            \draw node[latent, minimum size = 1.2cm] (T) at (0, -2) {tub};
            \draw node[latent, minimum size = 1.2cm] (E) at (1, -4) {either};
            \draw node[latent, minimum size = 1.2cm, dashed] (L) at (2.5, -2) {lung};
            \draw node[latent, minimum size = 1.2cm] (S) at (4, 0) {smoke};
            \draw node[latent, minimum size = 1.2cm] (Lo) at (2, 0) {$\textrm{lung}^\textrm{o}$};
            \draw node[latent, minimum size = 1.2cm] (B) at (4, -3) {bronc};
            \draw node[latent, minimum size = 1.2cm] (D) at (4, -6) {dysp};
            \draw node[latent, minimum size = 1.2cm] (X) at (1, -6) {xray};
            \draw[->] (A) edge (T);
            \draw[->] (T) edge (E) edge [bend right] (X);
            \draw[->] (S) edge (L) edge (B);
            \draw[->] (B) edge (D);
            \draw[->] (L) edge (D) edge (E) edge (Lo);
            \draw[->] (E) edge (X) edge (D);
            \draw[->] (X) edge (D);
        \end{tikzpicture}}\\ $V = lung$\\ $E = smoke\rightarrow either$}\tikzmarkend{d}
        } & &\\
        \cmidrule(lr){1-5}
        \newpage
        \cmidrule(lr){1-5}
        \parbox{4.5cm}{
        \centering
        \adjustbox{max totalheight = 4.5cm}{\begin{tikzpicture}[baseline = 0, scale = 0.85]
            \node[latent, minimum size = 1.2cm] (A) at (0, 0) {asia};
            \node[latent, minimum size = 1.2cm] (T) at (0, -2) {tub};
            \node[latent, minimum size = 1.2cm] (E) at (1, -4) {either};
            \node[latent, minimum size = 1.2cm] (L) at (2.5, -2) {lung};
            \node[latent, minimum size = 1.2cm] (S) at (4, 0) {smoke};
            \node[latent, minimum size = 1.2cm] (B) at (4, -3) {bronc};
            \node[latent, minimum size = 1.2cm] (D) at (4, -6) {dysp};
            \node[latent, minimum size = 1.2cm] (X) at (1, -6) {xray};
            \draw[->] (A) edge (T);
            \draw[->] (T) edge (E);
            \draw[->] (S) edge (L) edge (B);
            \draw[->] (B) edge (D);
            \draw[->] (L) edge (E);
            \draw[->] (E) edge (D) edge (X);
            \end{tikzpicture}}
        \figcaption{modified graph after $2^\textrm{nd}$ round Phase 1}
        \label{fig: modified graph after 2nd round Phase 1}} & & & &\\
        \bottomrule[1pt]
        \caption{The steps of the SED algorithm in modifying the Asia graph learned by HC (Figure~\ref{fig: original learned graph}) from synthetic data of sample size 10,000 with 5\% measurement error on all variables.}
        \label{tab: an example trace of the SED algorithm}
\end{longtable}
\section{Empirical evaluation}
\label{sec: empirical evaluation}
We validate the effectiveness of the SED algorithm, which can be viewed as a structure learning addon, by applying it to four well-established structure learning algorithms spanning different classes of learning. These are the score-based HC and ILP, the constraint-based PC-stable and the hybrid H2PC. We use the bnlearn R package~\citep{scutari2010learning} to test the effect on HC and H2PC, the rcausal R package~\citep{Chirayu2019} for PC-Stable and the pygobnilp python package~\citep{cussens2011bayesian} for ILP.

We use the BIC score as the objective function for the two score-based HC and ILP algorithms, including the score-based phase in H2PC. For the constraint-based algorithm PC-Stable, including the constraint-based phase in H2PC, we use the G-square test as the statistical test and set the threshold for rejecting the null hypothesis at 0.05. Lastly, ILP's maximum in-degree is set to 3 (default hyperparameter). Because BIC is a score-equivalent objective function, HC, ILP and H2PC produce a DAG from a Markov Equivalent Class of DAGs, and which we convert into the corresponding CPDAGs to be used as the input of the SED algorithm; i.e., input graph $G$ in Algorithm~\ref{alg: SED algorithm}. We employ two metrics to evaluate the learned CPDAGs. These are the F1 score which combines the $Precision$ and $Recall$ in the following form:
\begin{equation}
    F1 = 2\frac{Precision\cdot Recall}{Precision + Recall}
\end{equation}
and the Structural Hamming Distance (SHD)~\citep{tsamardinos2006max} which represents the number of edge additions, edge removals and arc reversals required to move from the learned graph to true graph.

The experiments are based on synthetic data generated from seven real-world BN models that are publicly available in the bnlearn repository~\citep{scutari2020}. These are the Asia, Alarm, Child, Insurance, Mildew, Water and Hailfinder networks. For each network, we generated seven error-free data sets with the sample sizes ranging from 100 to 100,000. Moreover, for each error-free data set we generated a noisy data set by assigning a randomised error rate $\alpha_i$, with an upper bound of 0.1, to every variable $V_i$ in a network. Specifically, for each state $v_i^l$ of an error-free variable $V_i$, we assign a randomised error rate $\alpha_i^l$, with an upper bound of $\alpha_i$, where the probability of the error for each state $v_i^l$ follows a Dirichlet distribution. This process produces the corresponding noisy conditional probability distribution of each observed variable $V_i^o$ based on the following equation:
\begin{equation}
    P\left(V_i^o\mid V_i = v_i^l\right) = \begin{cases}
      \alpha^l_{i1}, & V_i^o = v_i^1 \\
      \alpha^l_{i2}, & V_i^o = v_i^2 \\
      \quad\vdots & \quad\vdots\\
      1 - \alpha^l_i, & V_i^o = v_i^l \\
      \quad\vdots & \quad\vdots\\
      \alpha^l_{ir_i}, & V_i^o = v_i^{r_i}
    \end{cases}
    \label{equ: generation mechanism}
\end{equation}
where the parameters $\left(\alpha^1_{i1}, \alpha^l_{i2}, \cdots, \alpha^l_{ir_i}\right)\sim \alpha^l_i Dir\underbrace{\left(1, \ldots, 1\right)}_{r_i-1}$, $r_i$ represent the number of states in $V_i$.
\subsection{Results}
\label{subsec: results}
We explore the performance of the SED algorithm on both error-free and observed noisy data sets. Figure~\ref{fig: average_f1} presents the F1 scores produced by four algorithms averaged across all seven networks, on both the error-free and noisy data sets, with and without SED modifications. Note that for error-free data sets, there is no visible difference in the F1 scores between the original learned graphs and the modified graphs which implies that the SED algorithm has made largely insignificant modifications to the graphs learned from error-free data sets. On the other hand, the modifications made on graphs learned from noisy data have led to noticeable improvements, and particularly in cases where data have higher sample size. Specifically, the improvements on graphs produced by score-based HC and ILP are somewhat stronger compared to the improvements on graphs produced by PC-Stable and H2PC. This can be explained by the larger deterioration in the performance of score-based HC and ILP after adding noise to the data, larger negative repercussions of data noise increase the opportunities of SED to discover and correct errors. These results are consistent with the empirical analysis presented in Figure~\ref{fig: average number of cliques in learned graphs} which shows that score-based learning is more sensitive to measurement error compared to constraint-based learning.

Lastly, the observation that the SED modifications provide a greater benefit as the sample size increases, can be explained by the fact that the structure learning algorithms generally tend to produce more edges when the input data contains higher samples, such that more false positive 3-vertex cliques that could be detected and corrected by SED. We present the results of 3-vertex cliques in Appendix~\ref{app: results of 3-vertex cliques}. Another explanation is that the EM learning used by SED is less effective when the sample size of the input data is low.
\begin{figure}[H]
    \centering
    \includegraphics[width =0.85\linewidth]{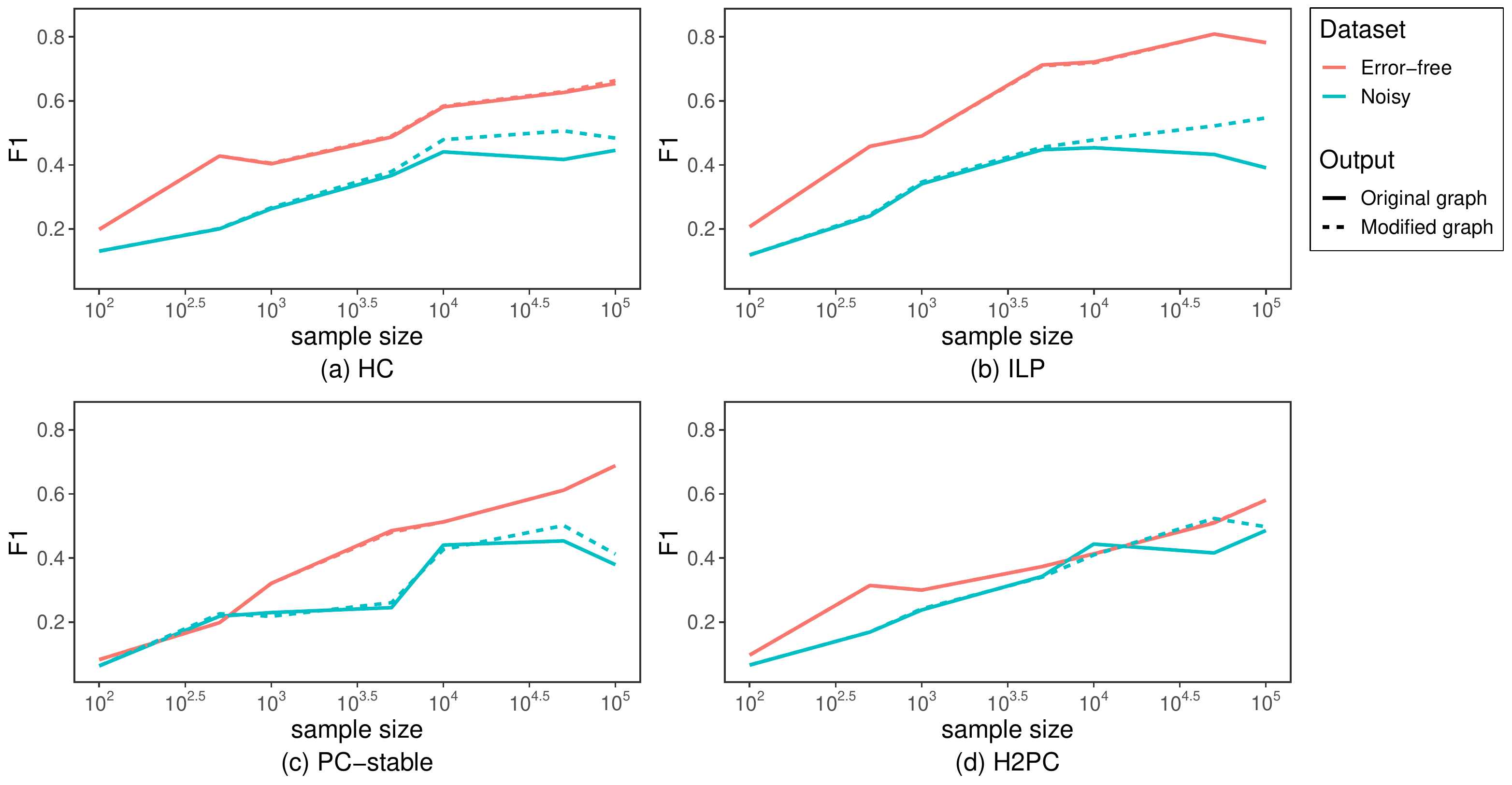}
    \caption{The average F1 scores produced by the four algorithms where solid lines represent the scores before SED modifications, dashed lines the scores following SED modifications, red lines the scores based on error-free data, and light blue lines the scores based on noisy data.}
    \label{fig: average_f1}
\end{figure}
\begin{figure}[H]
    \centering
    \includegraphics[width =0.85\linewidth]{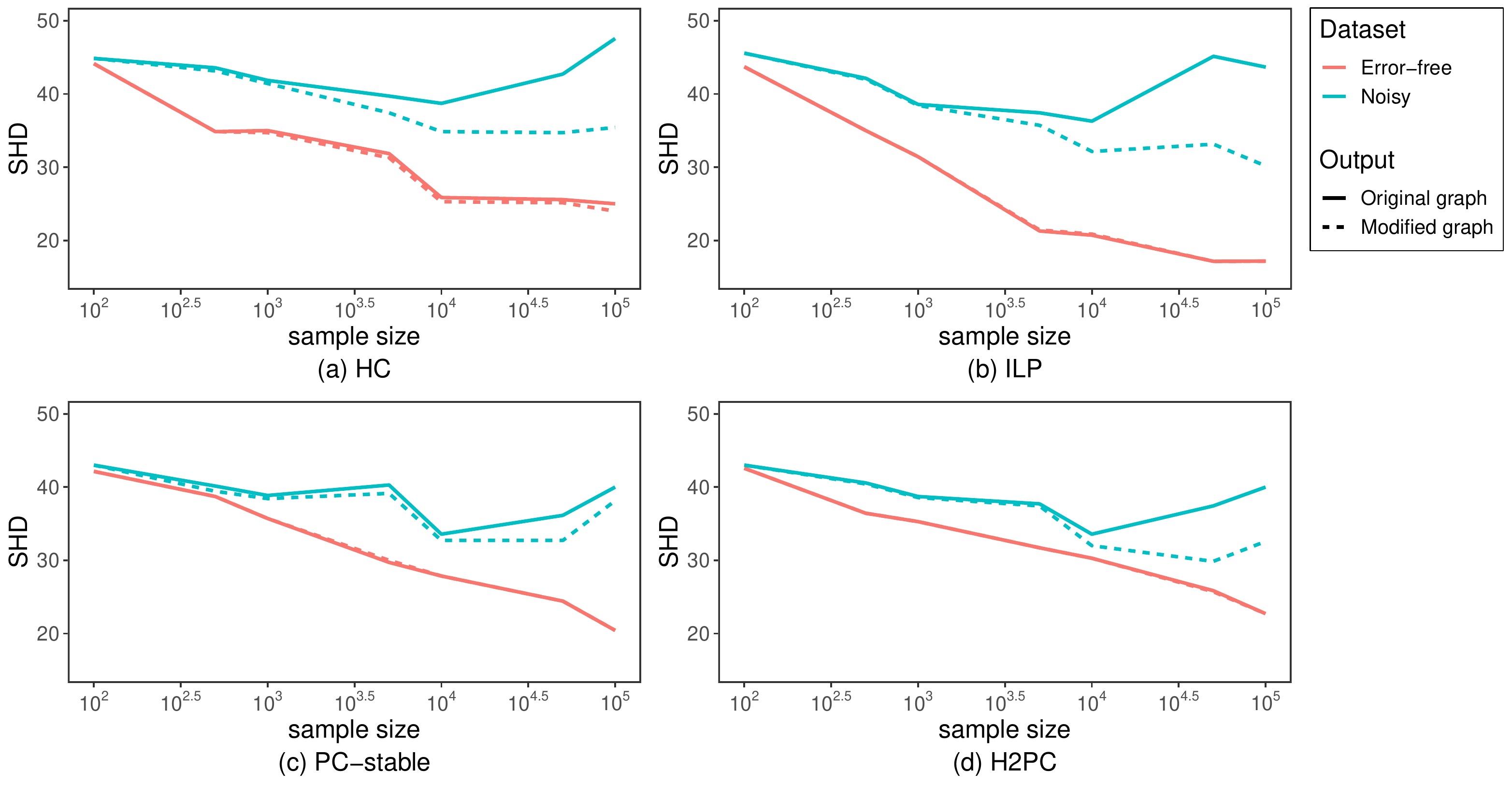}
    \caption{The average SHD scores produced by the four algorithms where solid lines represent the score before SED modifications, dashed lines the scores following SED modifications, red lines the scores based on error-free data, and light blue lines the scores based on noisy data.}
    \label{fig: average_shd}
\end{figure}
Figure~\ref{fig: average_shd} repeats these results for the SHD score. While the results are largely consistent with those based on the F1 score, the improvements appear to be major and more consistent in terms of SHD score, and also reveal minor improvements on graphs learned from error-free data. Overall, the SHD results suggest that the SED algorithm improves the graphs learned by the other algorithms by successfully eliminating a greater number of false positive, in relation to true positive, edges.

Table~\ref{tab: the summary} compares the scores between modified and original graphs. Interestingly, the results show that even when no measurement error exists in the input data (i.e., error-free cases), the SED modifications generally maintain or slightly improve the accuracy of the original graph. Overall, without measurement error in the data, the modifications increased the F1 score in 15 (out of 196) graphs and decreased it in seven graphs. Similarly, the modifications increased the SHD score in 15 graphs and decreased it in four graphs. However, major improvements are observed only in the graphs produced by HC, and this could be explained by the simplicity of HC which tends to stuck in a local optimum graph that may contain more false positive edges compared to the graphs produced by other algorithms, thereby giving more opportunities to the SED algorithm to correct the graph.

When measurement error exists in the input data (i.e., noisy cases), the SED modifications improve 92 out of the 196 (or 47\%) graphs according to the F1 score more, or 104 (53\%) according to the SHD score; although they also worsen the F1 score of 28 graphs (14\%), or six graphs (3\%) according to the SHD score. These percentages are generally consistent across all four algorithms irrespective of their class of learning.
\begin{table}[H]
  \centering
    \begin{tabular}{cccccc}
    \toprule[1pt]
    \multirow{3}[6]{*}{Algorithms} & \multirow{3}[4]{*}{\shortstack{Modified graph\\vs\\Original graph}} & \multicolumn{4}{c}{Evaluation metric} \\
\cmidrule(lr){3-6} & & \multicolumn{2}{c}{F1} & \multicolumn{2}{c}{SHD} \\
\cmidrule(lr){3-4}\cmidrule(lr){5-6} & & Error-free & Noisy & Error-free & Noisy\\
    \cmidrule(lr){1-6}
    \multirow{3}[2]{*}{HC} & Better & 13 (27\%) & 26 (53\%) & 14 (29\%) & 29 (59\%) \\
    & Same & 33 (67\%) & 18 (37\%) & 35 (71\%) & 19 (39\%)\\
    & Worse & 3 (6\%) & 5 (10\%) & 0 (0\%) & 1 (2\%) \\
    \cmidrule(lr){1-6}
    \multirow{3}[2]{*}{ILP} & Better & 0 (0\%) & 23 (47\%) & 0 (0\%) & 25 (51\%) \\
    & Same & 47 (96\%) & 21 (43\%) & 47 (96\%) & 24 (49\%) \\
    & Worse & 2 (4\%) & 5 (10\%) & 2 (4\%) & 0 (0\%) \\
    \cmidrule(lr){1-6}
    \multirow{3}[2]{*}{PC-stable} & Better & 0 (0\%) & 21 (43\%) & 0 (0\%) & 29 (59\%) \\
    & Same & 47 (96\%) & 16 (33\%) & 47 (96\%) & 18 (37\%) \\
    & Worse & 2 (4\%) & 12 (24\%) & 2 (4\%) & 2 (4\%) \\
    \cmidrule(lr){1-6}
    \multirow{3}[2]{*}{H2PC} & Better & 2 (4\%) & 22 (45\%) & 1 (2\%) & 21 (43\%) \\
    & Same & 47 (96\%) & 21 (43\%) & 48 (98\%) & 25 (51\%) \\
    & Worse & 0 (0\%) & 6 (12\%) & 0 (0\%) & 3 (6\%) \\
    \cmidrule(lr){1-6}
    \multirow{3}[2]{*}{Overall} & Better & 15 (8\%) & 92 (47\%) & 15 (8\%) & 104 (53\%) \\
    & Same & 174 (89\%) & 76 (39\%) & 177 (90\%) & 86 (44\%) \\
    & Worse & 7 (4\%) & 28 (14\%) & 4 (2\%) & 6 (3\%) \\
    \bottomrule[1pt]
    \end{tabular}
    \caption{Summary statistics on score difference between modified and original graphs, distributed per algorithm per evaluation metric per data set assumption.}
  \label{tab: the summary}
\end{table}
\section{Concluding remarks}
\label{sec: conclusion and discussions}
This paper described the SED algorithm that can be viewed as a structure learning addon which can be incorporated as an additional learning phase to discrete BN structure learning algorithms. The purpose of SED is to discover and eliminate potential false positive edges that structure learning algorithms tend to produce when learning graphs from data that contain measurement error, irrespective of their class of learning.

We have applied SED modifications to graphs produced by algorithms spanning different classes of learning (i.e., score-based, constraint-based and hybrid learning). The results are based on both error-free and noisy synthetic data that vary in sample size, and which have been generated from real-world BN models that also greatly vary in terms of the size of network. Overall, the results show that SED generally maintains, or slightly improves, the graphs produced by other algorithms when these graphs are learned from error-free data, and effectively improves the graphs learned from noisy-data.

A limitation of our work is that the proposed algorithm relies on the assumption that a noisy variable is independent of other variables in the network conditional on its error-free version, and this assumption is often considered to be too strong in some fields~\citep{hu2008identification}. For example, a survey on unemployment data by~\cite{bound2001measurement} shows that unemployment rate is underestimated, and the underestimation error appears to be dependent on the demographic characteristics of the respondent, such as age and sex. Moreover, since the problem of measurement error can be viewed as a special case of a hidden variable problem, future work could extend the application of this approach to structure learning algorithms designed to learn graphical structures under the assumptions of causal insufficiency~\citep{zhang2008completeness, ogarrio2016hybrid}.

\acks{This research was supported by the ERSRC Fellowship project EP/S001646/1 on Bayesian Artificial Intelligence for Decision Making under Uncertainty~\citep{constantinou2018bayesian}, and by The Alan Turing Institute in the UK under the EPSRC grant EP/N510129/1.}

\renewcommand{\theHsection}{A\arabic{section}}
\appendix
\section{EM algorithm and BIC score}
\label{app: EM algorithm adn BIC score}
The EM algorithm \citep{lauritzen1995algorithm} is an iterative process that computes the Maximum Likelihood Estimation (MLE) of the parameters $\theta$ for a given structure and from incomplete data. Generally, The EM algorithm can be decomposed in two steps, known as the Expectation step (E step) and the Maximization step (M step). In the E step, the EM algorithm computes the expected log-likelihood function $Q\left(\theta\mid\theta^t\right)$ based on $\theta^t$ obtained with each sample (data row) in the data. Assuming $\bm X$ represents a set of variables with missing values in data set $D$ with sample size $N$, the expectation of the LL function is:
\begin{equation}
Q\left(\theta\mid\theta^t\right) = \sum\limits^N_{m = 1}\sum\limits_{\bm x\in\Omega_{\bm X}}\mathbb{P}\left(\bm X = \bm x\mid D_m, \theta^t\right)\mathrm{log}\mathbb{P}\left(\bm X = \bm x, D_m\mid\theta\right)
\end{equation}
At the M step, the EM algorithm revises $\theta$ by maximising the expected LL:
\begin{equation}
    \theta^{t + 1} = \argmax\limits_\theta Q\left(\theta\mid\theta^t\right)
    \label{equ: general M step}
\end{equation}
The EM algorithm starts from a random initialisation of $\theta$ and terminates when the LL converges over a given threshold $\epsilon$:
\begin{equation}
\mathrm{log}\mathbb{P}\left(D\mid\theta^t\right) - \mathrm{log}\mathbb{P}\left(D\mid\theta^{t - 1}\right) < \epsilon
\label{equ: convergence}
\end{equation}
where $\epsilon$ is a threshold for judging whether the process is converged.

Applying EM learning on a BN requires that we compute:
\begin{equation}
    \widetilde{N}_{ijk}^t = \sum\limits_m\mathbb{P}\left(V_i = k, pa\left(V_i\right) = j\mid D_m, \theta^t\right)
\end{equation}
for the E step, where $\widetilde{N}_{ijk}$ represents the expected count of number of records where the value of variable $V_i = k$ and its parents $pa\left(V_i\right) = j$. For the M step, the solution of equation \ref{equ: general M step} has the following form:
\begin{equation}
    \theta^{t + 1} = \frac{\widetilde{N}_{ijk}^t}{\sum_k\widetilde{N}_{ijk}^t}
\end{equation}

Once the parameters of the model are estimated, the LL obtained by EM is used as the LL input in the BIC equation to measure the goodness-of-fit of a given reconstructed graph $G_r$ with respect to the observed data. Specifically, the BIC score of a BN model $M$ and corresponding data set $D$ is defined as:
\begin{equation}
    BIC\left(M\mid D\right) = \textrm{log}\mathbb{P}\left(D\mid M\right) - \frac{1}{2}\mathrm{log}\left(N\right)d
    \label{equ: BIC}
\end{equation}
where $N$ is the sample size of data set $D$ and $d = \sum\limits_i\left(r_i - 1\right)q_i$ is the number of free parameters in $M$, where $r_i$ represents the number of states in variable $V_i$ and $q_i$ represents the number of configuration of the parents of $V_i$. When computing the BIC score on Bayesian Network without hidden variables, due to the decomposability of the LL function, the equation \ref{equ: BIC} can be simplified as:
\begin{equation}
    BIC\left(M\mid D\right) = \sum\limits_{ijk}N_{ijk}\frac{N_{ijk}}{N_{ij}} - \frac{1}{2}\mathrm{log}\left(N\right)d
    \label{equ: simplified BIC}
\end{equation}
where $N_{ijk}$ is the number of counts when $V_i = k$ and $pa\left(V_i\right) = k$ and $N_{ij} = \sum_k N_{ijk}$.

However, when computing the BIC score on a reconstructed graph, the LL function is not decomposable due to the presence of hidden variable, which means that we cannot use equation \ref{equ: simplified BIC}. Instead, we use the LL converged at the final M-step as described in equation \ref{equ: convergence}. Moreover, if the learned graph or a reconstructed graph is a CPDAG, we will randomly select a DAG from the Markov equivalence class of the CPDAG and retrieve the BIC score of that DAG to represent the BIC score of the CPDAG, since the BIC score is equivalent for Markov equivalent DAGs.
\section{Results of 3-vertex cliques}
\label{app: results of 3-vertex cliques}
\begin{figure}[H]
    \centering
    \includegraphics[width=\linewidth]{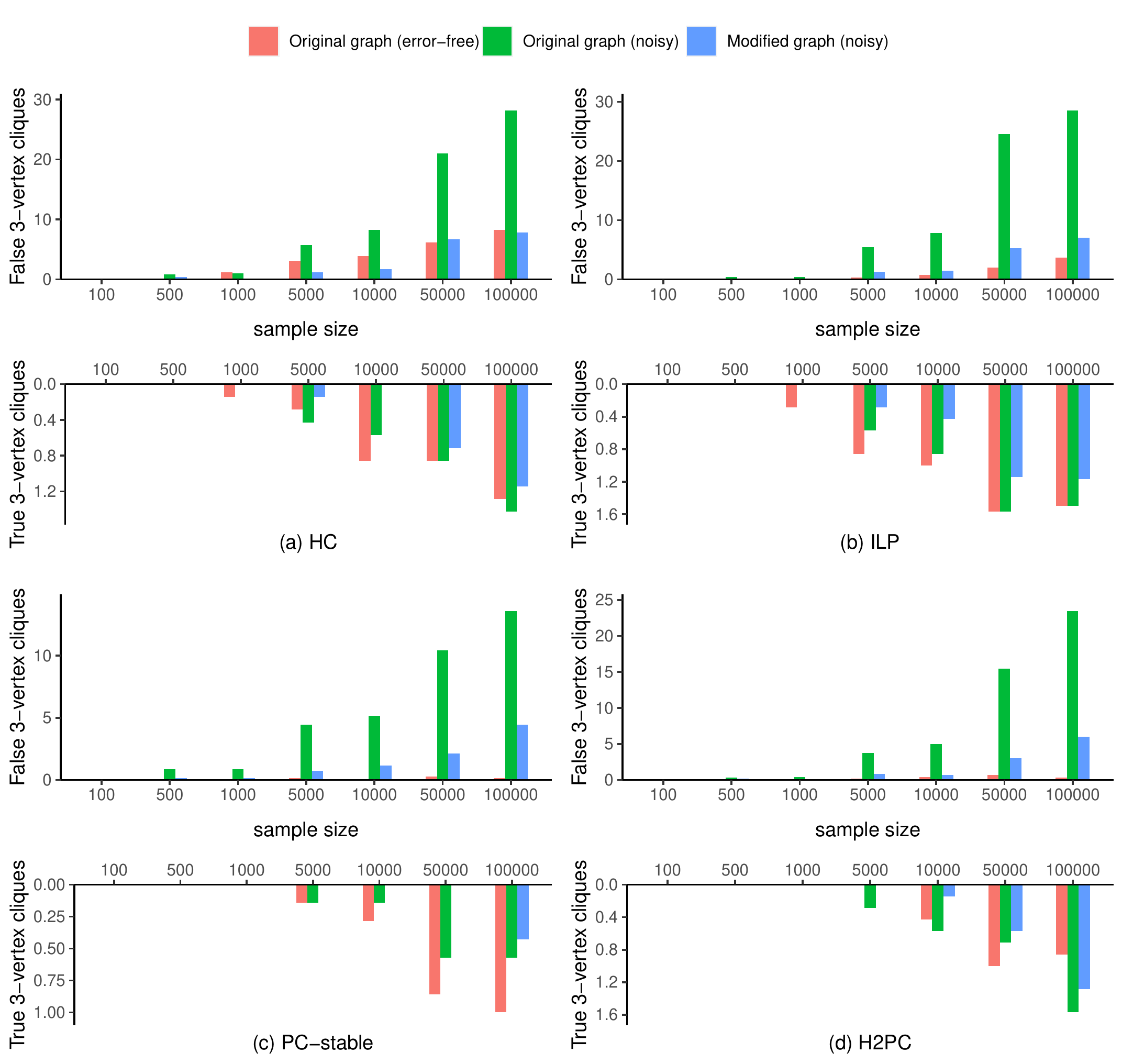}
    \caption{The average number of false and true 3-vertex cliques produced by the original graphs learned from error-free data sets, the original graphs learned from noisy data sets and the modified graphs learned from noisy data sets.}
    \label{fig: 3-vertex cliques}
\end{figure}
\vskip 0.2in
\bibliography{reference}
\end{document}